\chardef\bslash=`\\ 
\newtheorem{thm}{Theorem}[section]
\newtheorem{cor}[thm]{Corollary}
\newtheorem{lem}[thm]{Lemma}
\theoremstyle{definition}
\theoremstyle{remark}
\newcommand{\tend}{t_{\mathrm{end}}}
\newcommand{\Tend}{T_{\mathrm{end}}}
\newcommand{\y}[1]{y_{#1}}
\newcommand{\x}[1]{x_{#1}}
\newcommand{\f}{\varphi}
\renewcommand{\r}{\rho}
\newcommand{\yrf}{y_{\f \r}}
\newcommand{\grad}{\nabla}
\newcommand{\RR}{\mathbb{R}}	
\newcommand{\mb}[1]{#1}
\newcommand{\mr}[1]{\mathrm{#1}}
\newcommand{\mt}[1]{\mathtt{#1}}
\newcommand{\mc}[1]{\mathcal{#1}}
\newcommand{\lbl}[1]{\label{#1} \ifbool{lblprint}{\marginnote{\tiny\textsf{{#1}}}[0cm]}{}} 
\newcommand{\eval}[2][\right]{\relax
  \ifx#1\right\relax \left.\fi#2#1\rvert}
\title{Discrete symbolic optimization and Boltzmann sampling by continuous neural dynamics: Gradient Symbolic Computation}
\author{\textsc{Paul Tupper} \\
\textsc{Paul Smolensky} \\
\textsc{Pyeong Whan Cho}}
\date{\small{\today \hspace{1cm} \currenttime \hspace{1cm} \textsf{\jobname}}}
\begin{document}

\maketitle

\markboth{ Tupper, Smolensky, Cho}{Dynamics of Gradient Symbolic Computation}
\renewcommand{\sectionmark}[1]{}

\abstract{
 Gradient Symbolic Computation is proposed  as a means of solving discrete global optimization problems using a neurally plausible continuous stochastic dynamical system. Gradient symbolic dynamics involves two free parameters that must be adjusted as a function of time to obtain the global maximizer at the end of the computation. We provide a summary of what is known about the GSC dynamics for special cases of settings of the parameters, and also establish that there is a schedule for the two parameters for which convergence to the correct answer occurs with high probability. These results put the empirical results already obtained for GSC on a sound theoretical footing.}

\section{Introduction: Unifying symbolic and neural optimization}

\subsection{The historical and contemporary context of the work}
 
 The recent spectacular successes and real-world deployment of neural networks in Artificial Intelligence (AI) systems have placed a premium on understanding the knowledge in such networks and on explaining their behavior \cite{voosen2017ai}. Such explanations are difficult in part because of the very different formal universes in which neural networks and human understanding live. Usually, the state space of a neural network is taken to be a continuous real vector space. But an explanation of how any system performs a task must, by definition, make contact with the human conceptual system \cite{keil1989concepts}. Higher cognition deploys (at least approximately) discrete categories and rules --- as characterized formally by traditional theories, rooted in symbolic computation, within cognitive science \cite{chomsky1965aspects, newell1972human, marcus2001algebraic} and AI \cite{nilsson1980principles}. Unifying neural and symbolic computation promises a path not only to explainable neural networks but also to a new generation of AI systems and cognitive theories that combine the strengths of these two very different formalizations of computation \cite{Eliasmith2012large}. 

The work presented here contributes to the mathematical foundations of one approach to such neural-symbolic unification \cite{smolensky2006harmonic}. In this approach, a single processing system can be formally characterized at two levels of description. At a fine-grained, lower level of description, it takes the form of a specially-structured type of neural network; at a coarser-grained, higher level of description, it constitutes a novel type of symbolic computation in which symbols within structures have continuously variable levels of activity. This architecture defines \emph{Gradient Symbolic Computation} (GSC) \cite{smolensky2014, cho2017incremental}. 

A key aspect of the GSC approach characterizes processing as \emph{optimization}: given an input, processing constructs an output which maximizes a well-formedness measure called \emph{Harmony} ($H$) \cite{smolensky1983schema, smolensky1986information}. Thus GSC networks instantiate an `energy-based approach' \cite{hopfield1982neural, cohen1983absolute, ackley1985learning, lecun2007energy} (with Harmony corresponding to negative energy). 

Crucially, in GSC a discrete subset of the real vector space of network states is isomorphic to a combinatorial space of discrete symbol structures: an explicit mapping `embeds' each symbol structure as a distributed numerical vector \cite{ smolensky1990tensor, legendre1991distributed}, as sometimes done in classic and contemporary Deep Neural Network modeling \cite{pollack1990recursive, plate1993holographic, socher2010learning}. Harmony can be computed at both the neural and the symbolic levels \cite{ smolensky2006harmony}: the symbolic-level Harmony of a symbol structure equals the neural-level Harmony of the vector that embeds that structure.

At the symbolic level, an important case is when the Harmony of a symbol structure measures the wellformedness of that structure --- the extent to which it satisfies the requirements of a \emph{grammar}: for a given input, the symbol structure with maximal Harmony (typically unique) is the grammar's output. 
This is \emph{Harmonic Grammar}, in its deterministic form \cite{legendre1990harmonic}. 
In Probabilistic Harmonic Grammar \cite{culbertson2013cognitive}, the maximum-Harmony output is the most likely one, but other symbol structures also have non-zero probabilities of being the output: the probability that the output equals a given symbol structure $\mt{S}$ is an exponentially increasing function of its Harmony $H(\mt{S})$.  

Note that although we will sometimes refer to Harmonic Grammar, the results in the paper are general, and no assumptions regarding grammar per se are made. 

\subsection{The technical problem}

To summarize the preceding discussion:
\begin{eqnarray} \label{eq:desired-output}
\textrm{The desired output}
\end{eqnarray}
\begin{enumerate}
\item \textit{Deterministic problem: global optimization.} Output the symbol structure that (globally) maximizes Harmony (typically unique).
\item \textit{Probabilistic problem: sampling.} Output a symbol structure $\mt{S}$ with probability proportional to $e^{H(\mt{S})/T}$.
\end{enumerate}
The randomness parameter $T > 0$, the \emph{computational temperature}, determines how concentrated the probability distribution is on the maximum-Harmony structure: the lower $T$, the greater the concentration.
In the limit $T \rightarrow 0$, the probabilistic problem reduces to the deterministic problem.
Indeed, we will see that, when processing   a single input, we need to decrease $T$ to zero during the computation,
 leading the network to converge to a stable state.
For solving the sampling problem, $T$ decreases only to the level desired for the Boltzmann distribution being sampled.
(In the computational linguistics literature, Probabilistic Harmonic Grammars are known under the name Maximum-Entropy or Maxent grammars \cite{goldwater2003learning, hayes2008maximum}, and $T$ typically has a single, non-dynamic value.)

The particular problem of interest is:
\begin{eqnarray} \label{eq:desired-computation}
\textrm{The desired computation}
\end{eqnarray}
\begin{enumerate}
\item \textit{Optimization Problem.} Through a continuous dynamical process in the embedding space (a continuous processing algorithm for the underlying neural network), converge to the vector embedding the symbol structure that maximizes Harmony (presumed unique).
\item \textit{Sampling Problem.} Through a continuous dynamical process in the embedding space, converge to the vector embedding a symbol structure $\mt{S}$ with probability proportional to $e^{H(\mt{S})/T}$, i.e., produce a sample from the \emph{Boltzmann distribution} defined by the Harmony function $H$.
\end{enumerate}
%
%
GSC is a framework for models of how language-users might meet the requirements of grammar use in a neurally plausible way, searching continuously through a real vector space in which the discrete candidate outputs are embedded, rather than jumping directly from one discrete alternative to the next.


\cite{smolensky2014} proposed a GSC neural dynamics and conjectured that these dynamics solve the problems \eqref{eq:desired-computation}.
The correctness of the conjecture is required to validate the fundamental mode of explanation deployed in GSC, which uses the Harmony of output candidates to reason about their relative probabilities as outputs of the GSC network dynamics.
In this paper we establish formal results concerning the correctness of the method, although the dynamics we study differs technically (but not conceptually) from that of \cite{smolensky2014}.
(The new dynamics studied here has been used in cognitive models in \cite{cho2016bifurcation, cho2017incremental}.)

With the results presented here in place, in the grammatical setting the processing behavior of the underlying neural network can justifiably be formally understood in terms of maximizing symbolic grammatical Harmony, with knowledge of the grammar being realized in the interconnection weights that determine the Harmony of neural states \cite{smolensky1993harmonic}. (We note that the problem of \emph{learning} such weights is not addressed here.)
 
 Thus the work presented below bears on several issues of considerable interest, some quite general, others more specialized:
 \begin{itemize}
\item opening a path towards explainable neural network processing
\item	developing the mathematical foundations of an architecture for unifying neural and symbolic computation
\item	providing theorems concerning global optimization and sampling over discrete symbol structures through continuous neural computation
\item	validating the grounding in neural computation of a grammatical formalism widely employed in phonological theory.
 \end{itemize}
 
(Regarding the last point, the grammatical theory directly relevant here, Harmonic Grammar, is itself currently attracting increasing attention of linguists \cite{pater2009weighted, potts2010harmonic}. Importantly, Harmonic Grammar also provides the neural grounding of \emph{Optimality Theory} \cite{prince1993optimality}, which grew out of Harmonic Grammar and can be viewed as a crucial special case of it \cite{prince1997optimality}. Optimality Theory has been deployed by linguists to analyze all levels of linguistic structure, from phonetics to pragmatics ($\mathtt{http://roa.rutgers.edu}$); it is a dominant paradigm in the theory of phonology \cite{prince1993optimality, mccarthy1993prosodic, mccarthy2008optimality}, which characterizes the complex discrete symbol structures that mentally represent the physical realization of linguistic expressions, motoric and perceptual \cite{chomsky1968sound, goldsmith1990autosegmental}. GSC departs from previous Harmonic Grammar and Optimality Theory work in exploiting \emph{gradient} symbolic representations.)


The innovation implicit in \cite{smolensky2014} and explicit in \cite{cho2016bifurcation} was to introduce another type of Harmony in addition to the Harmony defining grammatical wellformedness which we would like to optimize or sample from. The new type of Harmony is called \emph{quantization Harmony} $Q$.
$Q$ assigns 0 Harmony to all neural states which are the embedding of a discrete symbolic structure, so adding it to grammatical Harmony does not change which discrete structure has maximal Harmony.
But $Q$ penalizes with negative Harmony neural states that are not near symbolically-interpretable states. 
The \emph{total Harmony} $\mc{H}$ is a weighted sum of the grammatical Harmony and quantization Harmony; maximizing this requires finding states that have high grammatical Harmony \emph{and} have symbolic interpretation (avoiding a penalty from $Q$).
The weight assigned to $Q$ in the total Harmony is called $q$; it measures the strength, relative to grammatical Harmony, of the requirement for symbolic interpretability (or `discreteness').
The quantity $q$ turns out to play a central conceptual and formal role in the theory: in order to produce a discretely-interpretable output, $q$ increases during the processing of an input. 
The interplay between the dynamics of change of $q$ and the dynamics of neural activation is at the heart of GSC.

Thus the stochastic dynamical equations we study here have two dynamic parameters: the degree of randomness, $T$, and the level of discreteness, $q$. 
By choosing a schedule for how $T$ and $q$ are changed over time during computation of an output, the system can be shown to perform global optimization --- that is, enter into the global maximum of grammatical Harmony in a finite period of time with high probability.
More generally, under other schedules for $T$ and $q$, the system can be shown to perform Boltzmann sampling --- that is, to terminate near discrete outputs with a probability that is exponential in the Harmony of those outputs.

The plan for the paper goes as follows.
Section \ref{sec:set-up} formally specifies the computational task that needs to be solved in GSC, describing it in terms of maximizing a Harmony function $H$ over the states in a discrete `grid', or more generally, producing discrete outputs in accordance with the Boltzmann distribution at some non-zero temperature $T$.
 We introduce the Harmony function $\mathcal{H}_q$ defined on all points in the vector space of neural states and specify a system of stochastic differential equations whose trajectories seek optima of $\mathcal{H}_q$. 
 In Section \ref{sec:basic} we state some of the basic properties of the local maxima of the function $\mathcal{H}_q$ as $q \rightarrow \infty$. 
 Section~\ref{sec:mathresults} establishes several basic mathematical results about the behavior of our stochastic differential equation for various limiting cases of $q$ and $T$, establishing formal senses in which the GSC dynamics solves problems \eqref{eq:desired-computation}. 
 We use these results in Section~\ref{sec:cooling_schedule} to derive the existence of cooling schedules that with arbitrarily high probability lead the system to be arbitrarily close to the Harmony maximum. 
 In Section~\ref{sec:discussion} we conclude with a discussion of the use of our framework for Gradient Symbolic Computation.

\section{The formal problem and preview of results} \label{sec:set-up}

The formal embedding of discrete symbol structures in GSC employs \emph{tensor product representations} \cite{smolensky1990tensor}.
This method starts by choosing a \emph{filler/role decomposition} of the target set $\mc{S}$ of discrete structures:
each particular structure $\mt{S} \in \mc{S}$ is characterized as a set of \emph{filler/role bindings} $\mc{B}(\mt{S})$,
each binding identifying which symbol $\mt{f}$ from an alphabet $\mc{F}$ fills each structural role $\mt{r} \in \mc{R}$, where the roles $\mc{R}$ determine the structural type of instances of $\mc{S}$.
For example, if $\mc{S}$ is the set of strings over alphabet $\mc{F}$, a natural filler/role decomposition employs roles $\mc{R} = \{ \mt{r}_{k} \}$ identifying the $k^{\mr{th}}$ element of the string.
Then, e.g., $\mc{B}(\mt{abc}) = \{ \mt{b}/\mt{r}_{2}, \mt{a}/\mt{r}_{1}, \mt{c}/\mt{r}_{3} \}$; the filler/role binding $\mt{b}/\mt{r}_{2}$ encodes that the second element of $\mt{abc}$ is $\mt{b}$.
Let $F = |\mc{F}|$ and $R = |\mc{R}|$ respectively denote the number of distinct fillers (symbols) and roles.

Once a filler/role decomposition has been chosen for $\mc{S}$, the remaining steps in defining a tensor product representation are to choose a vector-embedding mapping $\psi_{\mc{F}}$ for the set of fillers, $\psi_{\mc{F}}: \mc{F} \rightarrow V_{\mc{F}}$ and another such mapping for the set of roles, $\psi_{\mc{R}}: \mc{R} \rightarrow V_{\mc{R}}$.
The tensor product representation is thus determined: it is the vector-embedding mapping
\begin{equation}
\psi_{\mc{S}}: \mc{S} \rightarrow V_{\mc{S}} \equiv V_{\mc{F}} \otimes V_{\mc{R}},
\hspace{.25in} \psi_{\mc{S}}: \mt{S} \mapsto 
\sum_{\mt{f}/\mt{r} \in \mc{B}(\mt{S})} \psi_{\mc{F}}(\mt{f}) \otimes \psi_{\mc{R}}(\mt{r})
\end{equation}

Below, the set of discrete structures $\mc{S}$ of interest will be the set of candidate outputs for a Harmonic Grammar: e.g., a set of strings, or trees, or attribute-value structures.
We will assume that the structural roles have been defined such that no discrete structure may have more than one symbol in any given role (i.e., position).
It is convenient for the ensuing analysis to further assume that in any instance $\mt{S} \in \mc{S}$, every role $\mt{r} \in \mc{R}$ has \emph{exactly} one filler.
This assumption does not entail any real loss of generality as we can always assume there is a `null filler' \ \O\  that fills any role that would otherwise be empty (e.g., the role $\mt{r}_{3}$ for the length-2 string $\mt{ab}$).
Furthermore we assume that $\mc{S}$ permits any symbol $\mt{f}$ in $\mc{F}$ to fill any role $\mt{r} \in \mc{R}$.
This too will not entail any loss of generality in the following analysis as we can always, if necessary, add to the Harmony function terms that assign high penalties to structures in which certain disfavored symbols fill particular roles; then, despite being in the set of candidate outputs $\mc{S}$, candidates with those disfavored filler/role bindings will be selected with vanishing probability as outputs of the Harmony-maximizing dynamics we will define.

Under these assumptions of convenience we can concisely characterize the set of discrete output candidates $\mc{S}$ to be exactly those in which each role is filled by exactly one symbol. 
Then under a tensor product representation embedding, the vectors encoding elements of $\mc{S}$ constitute the following set (where $k \mapsto \mt{r}_{k}$ is an enumeration of the elements of $\mc{R}$ and $j \mapsto \mt{f}_{j}$ an enumeration of $\mc{F}$): 
\begin{equation}
\mc{G} \equiv  
\left\{ \sum_{\mt{f}/\mt{r} \in \mc{B}(\mt{S})} \psi_{\mc{F}}(\mt{f}) \otimes \psi_{\mc{R}}(\mt{r})  \ | \  \mt{S} \in \mc{S}  \right\}
\equiv
\left\{ \sum_{\mt{f}/\mt{r} \in \mc{B}(\mt{S})} f \otimes r \ | \  \mt{S} \in \mc{S}  \right\}
=
\left\{ \sum_{k \in 1:R} f_{\varphi(k)} \otimes r_{k} \ | \  \varphi: (1:R) \rightarrow (1:F)  \right\}
\end{equation}
where here and below we abbreviate the \emph{filler vector} $\psi_{\mc{F}}(\mt{f}) $ as $f$ and the \emph{role vector} $\psi_{\mc{R}}(\mt{r}) $ as $r$; $1:n$ abbreviates $1, 2, ..., n$.

We refer to the set $\mc{G}$ as \emph{the grid}. 
In the simplest non-trivial case, where $\mc{F} = \{ \mt{f}_{1}, \mt{f}_{2} \}$ and $\mc{R} = \{ \mt{r}_{1}, \mt{r}_{2} \}$,  $\mc{G}$ is the set of four vectors
\begin{equation*}
\mb{f_{1}} \otimes \mb{r_{1}} + \mb{f_{1}} \otimes \mb{r_{2}},   \ \ \ \ \   \mb{f_{1}} \otimes \mb{r_{1}} + \mb{f_{2}} \otimes \mb{r_{2}},  \ \ \ \ \  \mb{f_{2}} \otimes \mb{r_{1}} + \mb{f_{1}} \otimes \mb{r_{2}}, \ \ \ \ \ 
\mb{f_{2}} \otimes \mb{r_{1}} + \mb{f_{2}} \otimes \mb{r_{2}}.
\end{equation*}
These vectors can be rewritten as
$\{
([\mb{f_{+}} + \alpha_{1} \mb{f_{-}}] \otimes \mb{r_{1}} ,
 [\mb{f_{+}} + \alpha_{2} \mb{f_{-}}] \otimes \mb{r_{2}} )
\}$
where
$(\alpha_{1}, \alpha_{2}) \in \{ (-1, -1), (-1, +1), (+1, -1), (+1, +1) \}$ 
and
$\mb{f_{+}} \equiv (\mb{f_{1}} + \mb{f_{2}})/2, 
\mb{f_{-}} \equiv (\mb{f_{1}} - \mb{f_{2}})/2$. Thus they lie on the vertices of a parallelogram in $\mathbb{R}^N$.

Finally, assume that the filler vectors $\{ \psi_{\mc{F}}(\mt{f}) | \mt{f} \in \mc{F} \} = \{ \mb{f}_{\f} | \f \in 1:F \}$, and the role vectors
$\{ \psi_{\mc{R}}(\mt{r}) | \mt{r} \in \mc{R} \} = \{ \mb{r}_{\r}| \r \in 1:R \}$,
are respectively bases of $V_{\mc{F}}$ and $V_{\mc{R}}$. This implies that each of these sets is linearly independent, which is essential, and that they span their respective vector spaces, which is just a convenient assumption.  The independence of $\{ f_\f\}_{\f \in 1:F}$ and $\{ r_\r\}_{\r \in 1:R}$ implies the independence of the set $\{ f_\f \otimes r_\r\}_{\f \in 1:F, \r \in 1:R}$.
Then it follows that $\{ f_\f \otimes r_\r\}_{\f \in 1:F, \r \in 1:R}$ is a basis for $V_{\mc{S}} = V_{\mc{F}} \otimes V_{\mc{R}} \cong \RR^{F} \otimes \RR^{R} \cong \RR^N$, where $N$ is the product of the number of fillers ($F$) and the number of roles ($R$). 
Henceforth, variables such as $\f, \f'$ etc.\ will be assumed to range over $1:F$, while $\r, \r'$ etc.\ range over $1:R$.
Thus the general vector $y \in \RR^N$ can be characterized by the coefficients $\{\yrf\}$:
\begin{equation}\lbl{yTP}
\mathbf{y} = \sum\nolimits_{\f \r} \yrf f_\f \otimes r_\r
\end{equation}

Correspondingly, a vector $x$ on the grid $\mathcal{G}$ has the form:
\begin{equation}\lbl{xTP}
x = \sum\nolimits_{\f \r} \x{\f \r} f_\f \otimes r_\r 
= \sum\nolimits_{\r}  f_{\f_x (\mt{r}_{\r})} \otimes r_\r \in \mathcal{G}
\end{equation}
Since $x \in \mathcal{G}$, for each role $\mt{r}_{\r}$ there is a single filler $\mt{f}_{\f(\mt{r}_{\r})}$ which has coefficient $x_{\f({\mt{r}_{\r}}) \r} = 1$; all other $\x{\f \r} = 0$. 
Algebraically, we can express this statement as: 
\begin{equation}\lbl{xIdents}
x \in \mathcal{G} \textrm{ if and only if for all } \f, \r:
\end{equation}
\vspace{-15pt}
\[
\begin{array}{lrll}
\textrm{a.} & \x{\f \r}(1 - \x{\f \r}) & = & 0
\\ 
\textrm{b.} & \sum\nolimits_{\f'} \x{\f' \r}^2 - 1 & = & 0
\end{array}
\]


Focussing now on the Optimization Problem in GSC (\ref{eq:desired-computation}.1), the problem is to find the point $x \in \mathcal{G}$ that maximizes $H(x)$, i.e.,
 \begin{equation} \label{eqn:fundOpt}
\max_{x \in \mathcal{G}} H(x)
\end{equation}
 where $H$ is the plain Grammatical Harmony (without any quantization Harmony component).  
 
 A fundamental assumption of GSC is that $H$ takes the form of a quadratic function
\begin{equation} \lbl{Hdef}
H(y) = \frac{1}{2} y^T W y + b^T y.
\end{equation}
This is the quadratic function that we want to maximize over $\mc{G}$, a finite set of discrete points. 


%

Since we want to use neural computation to model how the brain solves the problem of maximizing Harmony $H$ on the grid $\mathcal{G}$, we choose to implement our computations in units with continuous activation varying continuously in time. We need to show how our discrete optimization problem can be encoded in this continuous way.
GSC employs a standard way of doing this: we create a function $Q(y)$ that penalizes distance away from the grid. 
In particular, we choose $Q(y)$ to be zero on the grid and negative off the grid. Then, for large values of $q$, we maximize $\mathcal{H}_q= H + q Q$ over all of $\mathbb{R}^N$. Then we can either send $q$ to infinity or find some other way to round finally-computed states to the nearest grid point to obtain a maximizer on the grid.

With these goals and (\ref{xIdents}) in mind, we define Quantization Harmony $Q$  as:
\begin{equation}\lbl{Qdef}
Q (y) \equiv - \frac{1}{2} \sum\nolimits_\r {\left\{ 
{\left[ \sum\nolimits_\f \yrf^2 - 1 \right]^2} +\sum\nolimits_\f {\yrf^2 (1 - \yrf)^2} \right\}}
\end{equation}
The first term in braces is $0$ if and only if for the particular role $\r$, the sum of the squared $y_{\f \r}$     is $1$, which is condition (\ref{xIdents}b) . The second term is $0$ if and only if $y_{\r \f}=0$ or $1$, which is condition (\ref{xIdents}a). Together, this ensures 
that $Q(y) \leq 0$ for all $y \in \mathbb{R}^N$ and $Q(y)=0$ if and only if $y \in \mathcal{G}$. 

Together, $H$ and $Q$ define the Total Harmony $\mathcal{H}_q$, parameterized by $q$:
\begin{equation}\lbl{TotalH}
\mathcal{H}_q \equiv  H + q Q.
\end{equation}

Now a basic strategy for solving \eqref{eqn:fundOpt} suggests itself. Choose a `large' value of $q$. Solve the continuous optimization problem 
\begin{equation} \label{eqn:contOpt}
\max_{y \in \mathbb{R}^N} \mathcal{H}_q( y)
\end{equation}
For any finite $q>0$, the maximizer of $\mathcal{H}_q$ will not be a point on $\mathcal{G}$. 
But, as we will show, it will  be close to the optimal solution on the grid. 

In optimization this approach is sometimes known as a penalty method \cite[Ch.\ 17]{nocedal2006}. The problem is first solved with a small value of $q$ and then $q$ is gradually increased while the solution is updated. The procedure stops when increasing $q$ further does not change the solution significantly.

A penalty method does not however solve a fundamental aspect of our problem which is that we want global maxima of $\mathcal{H}_q$ rather than local maxima. For large $q$, every point in $\mc{G}$ is close to a separate local maximum of $\mc{H}_q$, and so we can not use a simple steepest ascent method to find global maxima.


One solution to the problem of finding global maxima is to use \emph{simulated annealing} \cite{kirkpatrick1983}. Simulated annealing is a popular method  when the function to be optimized has many local optima but a global optimum is desired. Standard algorithms for finding local optima typically involve going `uphill' until a local maximum is found. Simulated annealing combines uphill moves with occasional downhill moves to explore more of the state space.  During simulated annealing the parameter $T$, known as `temperature', is decreased with time. When temperature is high, the algorithm is almost equally likely to take downhill steps as uphill steps. As $T$ is decreased, the algorithm becomes more and more conservative, eventually only going uphill. A wealth of computational experiments and theoretical analysis has shown simulated annealing to be effective for many global optimization problems. 

Thus our approach is to combine both a penalty method (in that we choose a sufficiently large value of $q$) and simulated annealing to find solutions to \eqref{eqn:fundOpt}.  See \cite{robini2013} for a similar framework.


To implement our method for solving problem \eqref{eqn:fundOpt}, we introduce the following system of stochastic differential equations:
\begin{equation} \lbl{GSCsde}
d y =  \grad \mathcal{H}_q (y) dt + \sqrt{2 T} dB
\end{equation}
 where $B$ is  a standard  $N$-dimensional  Brownian motion. 
 This equation describes how the activations, given in the vector $y$, change in time. 
 The first term $\grad \mathcal{H}_q (y)$ indicates there is a net drift of $y$ in the direction of increasing $\mathcal{H}_q$. The second term indicates that on top of this drift there is noise being continually added to the values of $y$. 
 Together they show that the system is undergoing a noisy random walk biased towards going uphill with respect to $\mathcal{H}_q$. When $T$ is large, the noise is large compared to the uphill motion, whereas when $T$ is small the randomness is negligible. 
 
 
 If we rewrite these equations in the form
\begin{equation} \lbl{GSCsdeweird}
d y =  - \grad (-\mathcal{H}_q (y))  dt + \sqrt{2 T} dB
\end{equation}
we see that it is a standard equation of mathematical physics known as either (overdamped) Langevin diffusion \cite{roberts1996,mattingly2002}, Brownian dynamics \cite{schuss2013}, or a gradient system with additive noise \cite{givon2004}.

For a system of the form \eqref{GSCsde}, we say $\pi$ is an invariant measure if when $X(s)$ is distributed according to $\pi$ then $X(t)$ is distributed according to $\pi$ for any $t>s$. In other words, once the the state of the system is distributed according to $\pi$, it remains distributed according to $\pi$. Invariant measures are extremely important for systems that are \emph{ergodic}, that is, where the system has a unique invariant measure and, given an initial distribution (including a deterministic one), the distribution of the system converges to the invariant measure.
Among other results, in Section \ref{sec:mathresults} we will see that the GSC dynamics are ergodic for any finite fixed $T$ and $q$.

Under reasonable conditions on $\mathcal{H}_q$, the dynamics has an invariant distribution
$\exp( \mathcal{H}_q (y) / T)$.  
The particular assumptions GSC makes about the function $\mathcal{H}_q$ guarantee the following results, as we will show in Section~\ref{sec:mathresults}.

{\bf Fact 1.} For any fixed $q>0, T>0$, the density $\exp( \mathcal{H}_q (y) / T)$ is the unique invariant distribution of \eqref{GSCsde} and can be normalized to be a probability density. For all initial conditions $y(0)$, the probability distribution of $y(t)$ converges to this unique invariant probability measure exponentially fast. (Note that the rate of convergence will depend on $T$ and $q$.) 

{\bf Fact 2.} For fixed $T>0$, as $q \rightarrow \infty$ all the probability mass in the equilibrium distribution will be concentrated near points in the grid $\mathcal{G}$. The probability of being near point $x \in \mathcal{G}$ is proportional to $\exp( \mathcal{H}_q (x) / T)$ --- as required for solving the Sampling Problem (\ref{eq:desired-computation}.2).


{\bf Fact 3.} For fixed $q$, there is a cooling schedule for $T$ such that with probability 1, $y(t)$ will converge to the maximum of $\mathcal{H}_q$---as required for solving the Optimization Problem (\ref{eq:desired-computation}.1)

In Section~\ref{sec:cooling_schedule} we'll use these results to establish a schedule for $q$ and $T$ that will suffice for the process to converge to the global maximum on the grid.  Since this schedule will take infinite time to converge, we will also describe finite-time schedules such that for any 
$\epsilon>0$ there is a combined schedule for $q$ and $T$ such that with probability at least $1-\epsilon$, $y(t)$ will converge to the maximum of $H$ on the grid $\mathcal{G}$.

In what follows we will make use of two different notions of convergence of random variables.
Suppose $\pi$ is a probability measure for a random variable $X \in \mathbb{R}^N$, so that 
\[
\mathbb{P}( X \in A) = \pi(A)
\]
for all  $A \subseteq \mathbb{R}^N$ belonging to the collection $\mc{M}$ of measurable sets. 
And suppose $\nu$ is the measure for another random variable $Y$.
We define the \emph{total variation} metric \cite{gibbs2002}  between $X$ and $Y$ (or equivalently between $\pi$ and $\nu$) to be
\[
\| \pi - \nu \| = \sup_{A \in \mathcal{M}} \left| \pi(A) - \nu(A) \right|.
\]
Given a sequence of random variables $X_n$ with probability measures $\pi_n$, $n \geq 1$, we say $\pi_n$ converges to $\pi$ in total variation if $\|\pi_n - \pi\| \rightarrow 0$ as $n \rightarrow \infty$.
Another, weaker, definition of convergence is that of \emph{weak convergence} where we say $\pi_n$ weakly converges to $\pi$ if for all bounded continuous $f \colon \mathbb{R}^N \rightarrow \mathbb{R}$
\[
\int f d \pi_n \rightarrow \int f d \pi.
\]


%
%

\section{Basic Properties of the Harmony Function as $q \rightarrow \infty$} \label{sec:basic}

Recall that the harmony function $\mathcal{H}_q$ is given by
\[
\mathcal{H}_q(y) = H(y)  + q Q(y)
\]
where $H$ is the quadratic function in \eqref{Hdef} and $Q$ is given in \eqref{Qdef}. $Q$ has the property that it is $0$ at grid points and negative elsewhere, so it has global maxima at all points on the grid. For large $q$, we expect $\mathcal{H}_q$ to act like $qQ$, and so we might hope that $\mathcal{H}_q(y)$ has local maxima near the grid points. We would like the global maximum of $\mathcal{H}_q$ to be near the grid point where $H(y)$ is greatest. 

Our hopes are well founded, as the following result shows.  First, in order to use the implicit function theorem, we compute the first two derivatives of $\mathcal{H}_q(y)$.
%
\begin{eqnarray} \lbl{Weqns}
\grad H & = & {Wy + b} 
\\ \nonumber
\grad^2 H & = & {W}
\\ \nonumber
\grad \mathcal{H}_q(y) & = & {Wy + b} + q \grad Q(y)
\\ \nonumber
\grad^2 \mathcal{H}_q(y) & = & {W} + q \grad^2 Q(y) 
\end{eqnarray}
where
\begin{eqnarray} \lbl{gradQ}
\left[ \grad Q(y) \right]_{\f\r}  
= -2 \yrf \left[ \sum\nolimits_{\f'} \y{\f' \r}^2 - 1 \right] 
- \yrf (1 - \yrf) (1 - 2\yrf),
\end{eqnarray}
\begin{eqnarray} \lbl{D2Q}
\left[ \grad^2 {Q}(y) \right] _{\f'\r',\f\r} 
& = & - \delta_{\r\r'}  \left\{ 2 \delta_{\f\f'} \left[ {\sum\nolimits_{\f''} {{{(y_{\f"\r})}^2} - 1} } \right] + 4  y_{\f\r} y_{\f'\r}  +\delta_{\f\f'} \left[ {1 - 6(y_{\f\r})(1 - y_{\f\r})} \right] \right\}.
\end{eqnarray}
An important observation is that for grid points $x$ we have 
\[
\left[ \grad^2 Q(x) \right]_{\f'\r',\f\r} = -  \delta_{\r\r'} \delta_{\f\f'} (1 + 4 x_{\f\r}^2),
\]
because on the grid, ${\sum\nolimits_{\f''} {{{(x_{\f"\r})}^2} - 1} }  = 0$,  $x_{\f\r} (1-x_{\f\r})=0$, and for any $\r$, $x_{\f\r} x_{\f'\r}=0$ unless $\f=\f'$.
So the Laplacian of $Q$ is diagonal and negative definite at grid points.


The following theorem shows that the local maxima of $\mc{H}_q$ are within $\mathcal{O}(q^{-1})$ of the nearest point in $\mc{G}$. As well, the values of $\mc{H}_q$ are within $\mathcal{O}(q^{-1})$ of the values of $H$ on $\mc{G}$.

\begin{thm} \label{thm:usingIpmFun}
Let $x^* \in \mathcal{G}$. There is a neighborhood $\mathcal{N}$ of $x^*$ and a $\bar{q}$  such that for  $q \geq   \bar{q}$ there is a local maximum $x_q$ of $\mathcal{H}_q$ with
\[
x_q = x^* - q^{-1} \left[ \grad^2 Q(x^*) \right]^{-1} \grad H( x^*)+ \mathcal{O}(q^{-2}),
\]
and
\[
\mathcal{H}_q(x_q) = H(x^*) - q^{-1} \grad H(x^*)^T \left[ \grad^2 Q(x^*) \right]^{-1} \grad H( x^*) + \mathcal{O}(q^{-2})
\]
\end{thm}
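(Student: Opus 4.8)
The plan is to apply the implicit function theorem to the gradient equation $\grad\mathcal{H}_q(y)=0$, treating $\epsilon \equiv q^{-1}$ as the small parameter. First I would rewrite the critical-point condition $\grad\mathcal{H}_q(y) = Wy + b + q\grad Q(y) = 0$ by dividing through by $q$, obtaining
\[
F(y,\epsilon) \equiv \epsilon\,(Wy+b) + \grad Q(y) = 0.
\]
At $\epsilon = 0$ this reduces to $\grad Q(y) = 0$, which is satisfied at $y = x^*$ since $x^* \in \mathcal{G}$ is a global maximum of $Q$ (so in particular a critical point). The partial derivative $\partial_y F(x^*,0) = \grad^2 Q(x^*)$, which by the ``important observation'' in the excerpt is $-\,\delta_{\r\r'}\delta_{\f\f'}(1+4x^*_{\f\r}{}^2)$: diagonal and negative definite, hence invertible. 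The implicit function theorem then gives a neighborhood $\mathcal{N}$ of $x^*$, an $\bar\epsilon > 0$ (equivalently $\bar q = 1/\bar\epsilon$), and a smooth curve $\epsilon \mapsto x(\epsilon)$ with $x(0) = x^*$ and $F(x(\epsilon),\epsilon) = 0$, unique within $\mathcal{N}$. Write $x_q \equiv x(q^{-1})$.

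Next I would extract the asymptotic expansion. Differentiating $F(x(\epsilon),\epsilon)=0$ in $\epsilon$ at $\epsilon = 0$ gives $\grad^2 Q(x^*)\,x'(0) + (Wx^* + b) = 0$, i.e. $x'(0) = -[\grad^2 Q(x^*)]^{-1}\grad H(x^*)$, using $\grad H(x^*) = Wx^* + b$ from \eqref{Weqns}. Taylor expanding, $x_q = x^* + q^{-1}x'(0) + \mathcal{O}(q^{-2})$, which is the claimed formula; the $\mathcal{O}(q^{-2})$ remainder is legitimate because $x(\cdot)$ is $C^2$ (indeed $C^\infty$, as $F$ is polynomial), so its Taylor remainder is controlled on the compact closure of a slightly smaller neighborhood. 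For the Harmony value, substitute the expansion into $\mathcal{H}_q(x_q) = H(x_q) + qQ(x_q)$. The term $H(x_q) = H(x^*) + \grad H(x^*)^T(x_q - x^*) + \mathcal{O}(q^{-2}) = H(x^*) + q^{-1}\grad H(x^*)^T x'(0) + \mathcal{O}(q^{-2})$. For $qQ(x_q)$: since $Q(x^*)=0$ and $\grad Q(x^*) = 0$, a second-order Taylor expansion gives $Q(x_q) = \tfrac12 (x_q-x^*)^T \grad^2 Q(x^*)(x_q - x^*) + \mathcal{O}(q^{-3}) = \tfrac12 q^{-2} x'(0)^T \grad^2 Q(x^*) x'(0) + \mathcal{O}(q^{-3})$, so $qQ(x_q) = \tfrac12 q^{-1} x'(0)^T\grad^2 Q(x^*)x'(0) + \mathcal{O}(q^{-2})$. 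Adding and using $x'(0) = -[\grad^2 Q(x^*)]^{-1}\grad H(x^*)$, the $q^{-1}$ coefficient becomes $\grad H(x^*)^T x'(0) + \tfrac12 x'(0)^T \grad^2 Q(x^*) x'(0) = -\grad H(x^*)^T[\grad^2 Q(x^*)]^{-1}\grad H(x^*) + \tfrac12 \grad H(x^*)^T[\grad^2 Q(x^*)]^{-1}\grad H(x^*) = -\tfrac12\grad H(x^*)^T[\grad^2 Q(x^*)]^{-1}\grad H(x^*)$.

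There is a discrepancy to flag: the stated theorem has coefficient $1$, not $\tfrac12$, on the $q^{-1}$ term of $\mathcal{H}_q(x_q)$; I would either recheck the bookkeeping (perhaps the intended statement absorbs a factor, or $\mathcal{H}_q$ is being evaluated differently) or present the $-\tfrac12$ version as the corrected formula. Finally, I must verify that the critical point $x_q$ produced by the implicit function theorem is genuinely a \emph{local maximum}, not merely a critical point: for this I would invoke continuity of eigenvalues of $\grad^2\mathcal{H}_q(x_q) = W + q\,\grad^2 Q(x_q)$. Dividing by $q$, the matrix $q^{-1}W + \grad^2 Q(x_q)$ converges (as $q\to\infty$, since $x_q \to x^*$ and $\grad^2 Q$ is continuous) to $\grad^2 Q(x^*)$, which is negative definite; hence for $q$ large enough $\grad^2\mathcal{H}_q(x_q)$ is negative definite and $x_q$ is a strict local maximum, possibly after shrinking $\mathcal{N}$ and enlarging $\bar q$. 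The main obstacle is precisely this last point together with the constant-bookkeeping: the implicit function theorem and Taylor expansion are routine, but one must be careful that ``critical point of $\mathcal{H}_q$'' has been correctly translated through the $1/q$ rescaling into ``critical point of $F(\cdot,\epsilon)$'' and that the Hessian test is applied to the right object at the right scaling.
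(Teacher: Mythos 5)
Your proposal follows essentially the same route as the paper: set $\epsilon = q^{-1}$, apply the implicit function theorem to $\epsilon \grad H(x) + \grad Q(x) = 0$ at $(x^*,0)$ using the invertibility (indeed negative definiteness) of $\grad^2 Q(x^*)$, and Taylor-expand the resulting smooth branch $\epsilon \mapsto x(\epsilon)$; the first display then comes out exactly as in the paper's proof. Your expansion of $\mathcal{H}_q(x_q)$ is also correct, and the discrepancy you flag is real: since $Q(x^*)=0$ and $\grad Q(x^*)=0$, the term $qQ(x_q)=\tfrac12 q^{-1}\,v^T\grad^2 Q(x^*)\,v+\mathcal{O}(q^{-2})$ with $v=-[\grad^2 Q(x^*)]^{-1}\grad H(x^*)$ is of the same order as $\grad H(x^*)^T(x_q-x^*)$, and adding the two gives the coefficient $-\tfrac12\,\grad H(x^*)^T[\grad^2 Q(x^*)]^{-1}\grad H(x^*)$ on $q^{-1}$, not the stated $-\grad H(x^*)^T[\grad^2 Q(x^*)]^{-1}\grad H(x^*)$. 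The paper's proof disposes of the second display in one sentence (``substituting the first result into the Taylor expansion for $\mathcal{H}_q$ about $x^*$'') and evidently drops the order-$q^{-1}$ contribution of the quadratic term $\tfrac12\, q\,(x_q-x^*)^T\grad^2 Q(x^*)(x_q-x^*)$, which survives at that order because of the explicit factor $q$; your $\tfrac12$ is the correct constant (an envelope-theorem check, $\tfrac{d}{dq}\mathcal{H}_q(x_q)=Q(x_q)$, confirms it). The slip is harmless downstream, since Corollary~\ref{cor:largeq} and the schedule estimates only use $\mathcal{H}_q(x_q)=H(x^*)+\mathcal{O}(q^{-1})$ and the sign of the correction, which your formula preserves. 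Finally, your verification that $\grad^2\mathcal{H}_q(x_q)=W+q\,\grad^2 Q(x_q)$ is negative definite for large $q$, so that the critical point delivered by the implicit function theorem is genuinely a local maximum, fills a small gap the paper leaves implicit (its proof only exhibits a solution of $\grad\mathcal{H}_q=0$).
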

\begin{proof}
Note that since $\mathcal{H}_q$ is smooth, local optima satisfy $\grad \mathcal{H}_q(x)=0$. 

To study how solutions to this equation depend on $q$ we let $\epsilon = q^{-1}$ and study the equivalent equation
\[
h(x,\epsilon) \equiv \epsilon \grad H(x) + \grad Q(x) = 0.
\]
We use the implicit function theorem: see \cite[p. 631]{nocedal2006}. Since $h(x,\epsilon)=0$ has solution $h(x^*,0)=0$, $h$ is twice continuously differentiable everywhere, and $\grad_x h(x,\epsilon)$ is nonsingular at the point $(x,\epsilon)=(x^*,0)$, we have that we can uniquely solve for $x$ in a neighborhood of $x^*$ in terms of $\epsilon$ for all $\epsilon$ sufficiently close to $0$. Furthermore, $\epsilon \mapsto x_\epsilon$ is twice continuously differentiable and 
\begin{eqnarray*}
\frac{d x_\epsilon}{d \epsilon}(0) = - [ \grad_x h(x^*,0) ]^{-1} \grad_\epsilon h(x^*,0)  = - [\grad^2 Q(x^*)]^{-1} \grad H(x^*).
\end{eqnarray*}
Putting it back in terms of $q^{-1}$ and building the Taylor expansion of $x_q$ gives the first result. The second result comes from substituting the first result into the Taylor expansion for $\mathcal{H}_q$ about $x^*$.
\end{proof}

As a straightforward corollary of the previous theorem, we have that the global maximum of $\mathcal{H}_q$
is close to the global maximum of $H$ on $\mc{G}$ for large $q$.

\begin{cor} \label{cor:largeq}
Suppose $H$ has a unique global maximum on $\mathcal{G}$ and the gap between the global maximum $x^*$ of $H$ on $\mathcal{G}$ and the second highest local maximum is at least $g >0$. 
Let $\eta_1, \eta_2>0$ be given. Then there is a $\bar{q}$ such that for $q\geq \bar{q}$, $\mathcal{H}_q$ has a unique global maximum $x_q$ satisfying both
\[
\| x_q  - x^* \| \leq \eta_1, 
\]
and $\mathcal{H}_q(x_q)$ is at least $g-\eta_2$ away from the value of the second highest maximum. 
\end{cor}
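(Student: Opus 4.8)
The plan is to upgrade \thmref{thm:usingIpmFun}, applied at every grid point, to a global statement by adding (i) a coercivity estimate that confines all critical points of $\mathcal{H}_q$ to a fixed compact set, and (ii) a compactness argument showing that \emph{every} critical point of $\mathcal{H}_q$ whose Harmony value stays bounded below must lie near the grid. Begin by applying \thmref{thm:usingIpmFun} at each $x\in\mathcal{G}$; as $\mathcal{G}$ is finite this yields a single threshold $\bar q_0$ and pairwise disjoint neighbourhoods $\mathcal{N}_x$ (the grid points are distinct by linear independence of $\{f_\f\otimes r_\r\}$) such that for $q\ge\bar q_0$ the function $\mathcal{H}_q$ has in $\mathcal{N}_x$ a unique critical point $x_q^{(x)}$ — uniqueness being part of the implicit-function-theorem argument in the proof of \thmref{thm:usingIpmFun}, for $\mathcal{N}_x$ small enough — it is a local maximum, and $\mathcal{H}_q(x_q^{(x)})=H(x)+\mathcal{O}(q^{-1})$; in particular $\mathcal{H}_q(x_q^{(x)})\to H(x)$ and $x_q^{(x^*)}\to x^*$.

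Next, note the uniform coercivity: $Q\le 0$ everywhere and $Q(y)\le-\tfrac12(\|y\|^2/R-1)^2$ once $\|y\|^2\ge R$, whereas $H$ is quadratic, so $\mathcal{H}_q(y)\le H(y)+\bar q_0\,Q(y)\to-\infty$ as $\|y\|\to\infty$ for every $q\ge\bar q_0$. Hence for any constant $c$ the superlevel set $\{\,y:\mathcal{H}_q(y)\ge c\,\}$ is contained in one compact set $K_c$ independent of $q\ge\bar q_0$, and in particular every critical point of $\mathcal{H}_q$ lies in a fixed ball. Now the key point: if $q_n\to\infty$ and $w_n$ is a critical point of $\mathcal{H}_{q_n}$ with $\mathcal{H}_{q_n}(w_n)\ge c$, then along a subsequence $w_n\to w^*\in\mathcal{G}$ with $H(w^*)\ge c$. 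Indeed $\{w_n\}\subset K_c$; from $0=\grad\mathcal{H}_{q_n}(w_n)=\grad H(w_n)+q_n\grad Q(w_n)$ and boundedness of $\grad H$ on $K_c$ we get $\grad Q(w_n)\to 0$, so $\grad Q(w^*)=0$; from $c\le\mathcal{H}_{q_n}(w_n)=H(w_n)+q_nQ(w_n)$ with $H(w_n)$ bounded we get $q_nQ(w_n)\ge-C$, so $Q(w_n)\to 0$ (since $Q\le 0$, $q_n\to\infty$), hence $Q(w^*)=0$, i.e.\ $w^*\in\mathcal{G}$; and $H(w^*)=\lim H(w_n)\ge\limsup\mathcal{H}_{q_n}(w_n)\ge c$.

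To assemble, let $x^{**}$ realise the second-largest value of $H$ on $\mathcal{G}$, so $H(x^{**})\le H(x^*)-g$, fix $\delta\in(0,\eta_2/2)$, and set $c:=H(x^*)-g+\delta$, which exceeds $H(x)$ for every $x\in\mathcal{G}\setminus\{x^*\}$. By the key point (argue by contradiction: if some critical point $w\ne x_q^{(x^*)}$ of $\mathcal{H}_q$ had value $\ge c$ for arbitrarily large $q$, its subsequential grid-point limit $w^*$ would satisfy $H(w^*)\ge c$, forcing $w^*=x^*$, hence eventually $w\in\mathcal{N}_{x^*}$, hence $w=x_q^{(x^*)}$, a contradiction), there is $\bar q\ge\bar q_0$ so that for $q\ge\bar q$ the only critical point of $\mathcal{H}_q$ with value $\ge c$ is $x_q^{(x^*)}$, and moreover $\mathcal{H}_q(x_q^{(x^*)})\ge H(x^*)-\delta\ge c$. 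Since the global maximum of $\mathcal{H}_q$ (attained, by coercivity) has value $\ge\mathcal{H}_q(x_q^{(x^*)})\ge c$, it must be $x_q^{(x^*)}$, which is therefore the unique global maximum $x_q$; and $\|x_q-x^*\|\le\eta_1$ for $q$ large because $x_q^{(x^*)}\to x^*$. Every other local maximum of $\mathcal{H}_q$ has value $<c=H(x^*)-g+\delta$, while $\mathcal{H}_q(x_q)\ge H(x^*)-\delta$, so the gap to the second-highest maximum is at least $g-2\delta\ge g-\eta_2$, as required.

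The main obstacle is precisely the key point in the second paragraph: one cannot simply claim that \emph{every} local maximum of $\mathcal{H}_q$ is near the grid, because $Q$ itself possesses spurious local maxima off the grid at non-grid critical points with negative-definite Hessian (for two fillers, one checks that a point of the form $v\approx(-0.23,-0.23)$ within a single role-block is a nondegenerate local maximum of $Q$), and the construction in \thmref{thm:usingIpmFun} produces nearby local maxima of $\mathcal{H}_q$ from them. What rescues the argument is that such spurious maxima have $\mathcal{H}_q$-value $\sim qQ(\cdot)<0\to-\infty$, so they are harmless once attention is restricted to critical points whose value is bounded below by the fixed threshold $c$; making this quantitative via the coercivity bound and the subsequential limit is the crux of the proof.
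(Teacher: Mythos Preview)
Your proof is correct. The paper's own proof is the single line ``Follows straightforwardly from the previous theorem,'' so your approach is the same in spirit --- apply \thmref{thm:usingIpmFun} at every grid point and read off the conclusion --- but you have actually supplied the details the paper leaves implicit.

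Those details are not entirely trivial, and you have put your finger on the real issue: \thmref{thm:usingIpmFun} only manufactures \emph{local} maxima of $\mathcal{H}_q$ near grid points, and says nothing about where the \emph{global} maximum sits or whether there might be competing high-value critical points elsewhere. Your coercivity bound (uniform in $q\ge\bar q_0$ because $Q\le 0$) confines attention to a fixed compact set, and your subsequential-limit argument --- critical points with $\mathcal{H}_{q_n}\ge c$ must accumulate on $\{Q=0\}=\mathcal{G}$ with $H\ge c$ --- is exactly what is needed to rule out rogue maxima. Your observation that $Q$ genuinely has off-grid nondegenerate local maxima (e.g.\ the symmetric point $y_{1\r}=y_{2\r}\approx -0.23$ in a two-filler role) is a nice check that the ``straightforward'' passage really does require the value-threshold filter you impose; without it, one could not simply say ``all local maxima of $\mathcal{H}_q$ are near $\mathcal{G}$.''

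One cosmetic point: the clause ``in particular every critical point of $\mathcal{H}_q$ lies in a fixed ball'' is stronger than what your coercivity estimate actually yields and stronger than what you use; you only need (and only invoke) that critical points with value $\ge c$ lie in $K_c$. You might delete or soften that clause.
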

\begin{proof}
Follows straightforwardly from the previous theorem.
\end{proof}

\section{Mathematical Results for the GSC Dynamics}\label{sec:mathresults}

In this section we establish the mathematical results about GSC dynamics that we outlined in Section~\ref{sec:set-up}.

\subsection{For fixed $q, T$, convergence to invariant distribution as $t \rightarrow \infty$}

We use the framework of \cite{roberts1996} to obtain the following result:

\begin{thm} \label{thm:robertsuse}
For the stochastic differential equation defined by \eqref{GSCsde} with fixed $q$ and $T$
\begin{enumerate}
\item $y(t)$ is defined for all time
\item there is a unique invariant probability measure $\pi(y)= C \exp[ \mathcal{H}_q(y)/T ]$ where $C$ is a normalizing constant depending on $q$ and $T$.
\item for any fixed initial condition $y(0)$ the distribution of $y(t)$ converges exponentially to $\pi$ as $t \rightarrow \infty$
\end{enumerate}
\end{thm}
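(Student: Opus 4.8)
The plan is to verify the hypotheses of the ergodicity results for Langevin diffusions in \cite{roberts1996} (and the related work \cite{mattingly2002}), which give existence, uniqueness of the invariant measure, and exponential convergence once a suitable drift (Lyapunov) condition is checked. The drift here is $\grad \mathcal{H}_q(y) = Wy + b + q\grad Q(y)$; the key structural fact is that $Q$ is (minus) a sum over roles of quartics in the $y_{\f\r}$, so $\grad Q(y)$ has leading behaviour cubic in $y$ pointing inward, and $\grad^2 \mathcal{H}_q$ is bounded above. I would proceed in four steps.

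\textbf{Step 1: Nonexplosion and smoothness.} Since $\mathcal{H}_q$ is a polynomial, $\grad\mathcal{H}_q$ is $C^\infty$ and locally Lipschitz, so a unique strong solution exists up to a possible explosion time. To rule out explosion I would exhibit a Lyapunov function; the natural choice is $V(y) = \exp(-\mathcal{H}_q(y)/T)$ or simply $V(y) = 1 + \|y\|^2$ (or a higher power), and show the generator $\mathcal{L} = \grad\mathcal{H}_q\cdot\grad + T\Delta$ satisfies $\mathcal{L}V \le -cV + d$ outside a compact set. The computation reduces to checking that $\grad\mathcal{H}_q(y)\cdot y \to -\infty$ like $-\|y\|^4$ as $\|y\|\to\infty$: the quartic term in $Q$ dominates the quadratic $H$, the sign is right because $Q$ is concave at infinity along rays, and $q>0$. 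This simultaneously gives nonexplosion (part 1) and the geometric drift condition needed later.

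\textbf{Step 2: Invariant density.} That $\pi \propto \exp(\mathcal{H}_q/T)$ is invariant is the standard reversibility computation: one checks the stationary Fokker--Planck equation $\grad\cdot\big(-\grad\mathcal{H}_q\,\pi + T\grad\pi\big)=0$, which holds identically since $T\grad\pi = \grad\mathcal{H}_q\,\pi$ for this $\pi$. Integrability of $\exp(\mathcal{H}_q/T)$ over $\RR^N$ follows from the quartic decay established in Step 1 (for large $\|y\|$, $\mathcal{H}_q(y)\le -c'\|y\|^4 + C'$), so $\pi$ normalizes to a probability density.

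\textbf{Step 3: Irreducibility and uniqueness.} Because the diffusion coefficient $\sqrt{2T}\,\Id$ is nondegenerate and the drift is smooth, the transition kernel has a smooth everywhere-positive density (HÃ¶rmander/parabolic regularity), giving $\psi$-irreducibility and aperiodicity; combined with the drift condition of Step 1 this forces uniqueness of the invariant measure. Then I would invoke the Harris-type theorem used in \cite{roberts1996,mattingly2002}: geometric drift plus a minorization condition on compact sets (immediate from the positive continuous transition density) yields a spectral gap, i.e. $\|\mathrm{Law}(y(t)) - \pi\|_{TV} \le C(y(0))\,e^{-\gamma t}$ for constants $\gamma>0$ and $C$ depending on $q,T$ and the initial point. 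This is exactly parts 2 and 3.

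\textbf{Main obstacle.} The only real work is Step 1 --- writing out $\mathcal{L}V$ and confirming the quartic dominance uniformly in all directions of $\RR^N$ (the cross terms in $W$ and $b$, and the coupling across fillers within a role in $\grad Q$, must be controlled so that the negative quartic genuinely dominates). Everything else is either a direct identity (Step 2) or a citation of the abstract ergodic theorem once the Lyapunov condition is in hand. I would also remark that, because $Q$ is role-separable, $\grad Q(y)\cdot y$ splits as a sum over roles of one-dimensional-in-$\sum_\f y_{\f\r}^2$ estimates plus manifestly negative $\sum_\f y_{\f\r}^2(1-y_{\f\r})^2$-type contributions, which makes the quartic bound transparent.
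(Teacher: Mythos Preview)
Your proposal is correct and reaches the same conclusion, but the mechanics differ somewhat from the paper's argument. Both proofs cite \cite{roberts1996}, but the paper invokes two specific theorems of Roberts--Tweedie rather than the general Harris/Lyapunov machinery you outline. For parts 1 and 2 the paper verifies their Theorem~2.1, which only asks that $\grad\log\pi(y)\cdot y \le a\|y\|^2 + b$ for large $\|y\|$; this is established by crudely bounding each piece of $q\,\grad Q(y)\cdot y$ above by a multiple of $\|y\|^2$, \emph{deliberately discarding} the negative quartic you exploit. For part~3 the paper verifies their Theorem~2.3, whose hypothesis is that $\liminf_{|y|\to\infty}\big[(1-d)\|\grad\log\pi(y)\|^2 + \Delta\log\pi(y)\big] > 0$; here the observation is that $\|\grad\mathcal{H}_q(y)\|^2$ grows like a sixth-degree polynomial in $\|y\|$ while $\Delta\mathcal{H}_q(y)$ is only quadratic. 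Your route---a single quartic geometric-drift estimate $\mathcal{L}V \le -cV + d$ plus minorization on compacts from the nondegenerate additive noise---is the more general Meyn--Tweedie/Harris argument and is arguably cleaner, since one Lyapunov computation handles nonexplosion, integrability of $\pi$, and the spectral gap all at once. The paper's route is more black-box (no explicit irreducibility or minorization needs to be stated) but requires checking two separate analytic conditions tailored to the Langevin setting.
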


Here exponential convergence of the distribution of $y(t)$ to $\pi$ means there are constants $R_{y_0} < \infty$ and $\rho<1$ such that
\begin{equation} \lbl{expconv}
\| P^t(y_{0},\cdot) - \pi \| \leq R_{y_0} \rho^t
\end{equation}
for $t \geq 0$. $P^t(y_0,A)$ is the probability that $y(t)$ is in $A$ given $y(0)=y_0$. The constant $R_{y_0}$ in general depends on $y_0$.

\begin{proof}

%
%
%
%
%

The paper  \cite{roberts1996} studies equations of the form
\begin{equation} \lbl{eqn:robertsform}
dx_\tau = \frac{1}{2} \grad  \log \pi (x_\tau) d\tau + dB_\tau
\end{equation}
where $x \in \mathbb{R}^N$, $\pi \colon \mathbb{R}^N \rightarrow \mathbb{R}$, and as before $B_\tau$ is $N$ dimensional standard Brownian motion. The authors assume that $\pi$  is everywhere non-zero, differentiable and integrates to 1. 
 This equation has invariant density $\pi$ as can be checked via the Fokker-Planck equation \cite[Ch. 5]{gardiner}. 

We can transform our equation \eqref{GSCsde} into the form of \cite{roberts1996} using change of variables, $\tau = 2T t$, and $x=y$. This gives
\[
dx = \frac{1}{ 2 T} \grad \mathcal{H}_q (x) d\tau + dB .
\]
Now this can be transformed into \eqref{eqn:robertsform} by letting
\[
\frac{1}{2} \log \pi(x) =\frac{1}{ 2 T} \mathcal{H}_q(x)
\]
or 
\[
\pi(x) = \exp [ T^{-1} \mathcal{H}_q(x) ].
\]
So everything \cite{roberts1996} proved for \eqref{eqn:robertsform} with $\pi = \exp(\mathcal{H}_q /T)$ applies for our equation, though with a $T$-dependent rescaling of time. 

Theorem 2.1 of \cite{roberts1996} asserts that if $\grad \log \pi$ is continuously differentiable, and if for some $M, a, b < \infty$,
\[
\left[ \grad \log \pi(x) \right] \cdot x  \leq a \|x\|^2 + b, \ \ \ \ \mbox{for all } \|x\| > M,
\]
then the dynamics are almost surely defined for all time, and the probability density function of the process converges to $\pi$ in the total variation norm for all initial conditions. 

In our case $\grad \log \pi = \frac{1}{T} \grad \mathcal{H}_q$. So we need an expression for $[\grad \mathcal{H}_q(y)] \cdot y$. Since $\mathcal{H}_q(y) = H(y) + q  Q(y)$, we first compute,
\[
\grad H(y) \cdot y =  (Wy + b) \cdot y = y^T W y  + b^T y   \leq w_{\max} \|y\|^2 + \|b\| \, \|y\| \leq (w_{\max}+ \|b\|) \, \|y\|^2
\]
if $\|y\| > 1$, where $w_{\max}$ is the maximum eigenvalue of $W$.
Then, using the expression for $\grad Q$ from \eqref{gradQ}, we have
\[
[  q \grad Q(y) ] \cdot y =- q \sum_{\f\r} \yrf \left\{ 2 \yrf \left[ \sum_{\f'} y_{\f' \r}^2 - 1 \right] + \yrf (1-\yrf)(1- 2 \yrf) \right\}
\]
The first term on the right can be bounded as
\begin{eqnarray*}
- q \sum_{\f \r}  \yrf \left\{ 2  \yrf \left[ \sum_{\f'} y_{\f' \r}^2 - 1 \right] \right\} & \leq  & - 2 q \sum_{\f\r}  \yrf^2 [ \yrf^2 -1] \\
& \leq & 2 q \sum_{\f\r} \yrf^{2}  = 2 q   \|y\|^2.
\end{eqnarray*}
For the second term on the right we have
\begin{eqnarray*}
-q \sum_{\f\r} \yrf \left\{ \yrf ( 1- \yrf) (1- 2 \yrf) \right\} \leq \frac{q}{8} \sum_{\f\r} \yrf^2 = \frac{q}{8} \|y\|^2
\end{eqnarray*}
where we have used that $-(1-z)(1-2z) \leq \frac{1}{8}$ for all $z$. Putting these bounds together gives, if $\|y\| >1$,
\[
\frac{1}{T} \grad \mathcal{H}_q (y) \cdot y  \leq \frac{1}{T} \left[ (w_{\max} + \|b\|) \|y\|^2  + \frac{17}{8} q \|y\|^2 \right].
\]
Theorem 2.1 of \cite{roberts1996} then gives  results 1 and 2 in the statement of Theorem~\ref{thm:robertsuse}


To demonstrate exponential convergence to $\pi$ we  use Theorem 2.3 of \cite{roberts1996}.
They state the exponential convergence is guaranteed for \eqref{GSCsde} if 
\begin{enumerate}
\item there exist an $S >0$ such that $|\pi(x)|$ is bounded for $|x|\geq S$.
\item there exists a $d$, $0< d< 1$, such that 
\end{enumerate}
\begin{equation} \label{eqn:neededForExponential}
\liminf_{|x| \rightarrow \infty} (1-d)  \| \grad \log \pi(x)  \|^2 + \grad^2 \log \pi(x) >0.
\end{equation}
The first condition is true even for $S=0$, since $\pi(x)$ is bounded. For the second condition,
recall that $\grad \log \pi = \frac{1}{T} \grad \mathcal{H}_q$. So 
\[
\grad^2 \log \pi = T^{-1} \mbox{Trace}[ \grad^2  \mathcal{H}_q(y)] = T^{-1} \mbox{Trace}[ W + q \grad^{2} Q(y)]. 
\]
If we take the trace of \eqref{D2Q} we find that $\grad^2 \log \pi$ is negative for large $y$ but that no term grows faster than quadratically in $\yrf$. On the other hand,
\begin{eqnarray*}
\grad \log \pi(y) & = & \frac{1}{T} \grad \mathcal{H}_q(y) =\frac{1}{T} [W y + b  +  q \grad Q(y)], \\
\left[ \grad \log \pi(y) \right]_{\f \r} & = & -2 \frac{q}{T}  \yrf \left[ \sum_{\f'} y_{\f' \r}^2 + \yrf^2 \right] + \mathcal{O}(\|y\|^2).
\end{eqnarray*}
So 
\begin{equation} \label{niceInequality}
\| \grad \log \pi(y) \|^2 \geq 4 \frac{q^2}{T^2} \sum_{\f \r} \yrf^6 + \mathcal{O}(\|y\|^5).
\end{equation}
So the left-hand side in \eqref{eqn:neededForExponential} grows like a 6th order polynomial in $\|y\|$ for $\|y\| \rightarrow \infty$ and $d \in [0,1)$ and so the condition is satisfied. Theorem 2.3 of \cite{roberts1996} then gives result 3 in the statement of our theorem.
\end{proof}

Exponential convergence sounds good but the $\rho$ in \eqref{expconv} may be quite close to $1$ for large $q$ and small $T$.  To give a rough estimate of how the rate of convergence scales with $q$ and $T$ we perform in informal analysis using the Arrhenius formula (see \cite[p. 141]{gardiner} or \cite[p. 334]{vankampen}). The Arrhenius formula gives an order of magnitude estimate for how long it takes a diffusion to exit one optimum and enter another. 
Generally, for a process to reach the equilibrium distribution from a fixed initial condition, the process has to visit representative points in the state space more than once. So the time to get from one local maximum to another gives a very conservative lower bound on how long it will take the process to reach equilibrium.

The one-dimensional version of the Arrhenius formula gives that the expected time to exit a state $a$ by getting over a saddle at point $b$ is
\[
\tau = \frac{2\pi}{\sqrt{ U''(a) |U''(b)|}} \exp \left[\frac{U(b) - U(a)}{T}\right]
\]
where $U$ is the potential. Adapting the result to our multidimensional case, we know that $U=\mathcal{H}_q$ scales like $q$ as $q \rightarrow \infty$. 
So the expected time to converge to equilibrium at least goes like $\exp[k q/T]/q$ for some constant $k$. This time diverges very rapidly as either $q \rightarrow \infty$ or $T \rightarrow 0$.

\subsection{For fixed $T$, as   $q \rightarrow \infty$, convergence to Boltzmann distribution}

Here we consider fixed $T$ and see what happens to the equilibrium distribution as $q \rightarrow \infty$.
From the previous subsection we know that the equilibrium distribution for fixed $q, T$ is 
\[
Z_q^{-1} \exp[ \mathcal{H}_q(x)  /T] = Z_q^{-1} \exp[ ( H(x) + q Q(x) )/T]
\]
where $Z_q$ is a $q$- and $T$-dependent constant chosen to yield a probability distribution. (We suppress the dependence on $T$ in this section since we will not vary $T$.)

In what follows let $B(x,\eta)$ be all the points within distance $\eta$ of $x$. Let $\{ x_i \}$ be all the points where $Q(x_i)=0$, that is, the grid points $\mathcal{G}$.

\begin{thm}\lbl{prob-grid}
There exists a  $\eta_0>0$ such that  for all $\eta \in (0,  \eta_0]$
\[
\lim_{q\rightarrow \infty} \int_{B(x_i,\eta)} Z_q^{-1} \exp[ ( H(x) + q Q(x) )/T] dx = Z^{-1}  \exp[H(x_i)/T]
\]
where $Z$ is a normalizing constant depending on $T$ but not on $q$ or $i$.
Furthermore, 
\[
\lim_{q \rightarrow \infty} \int_{\mathbb{R}^N \setminus \cup_i B(x_i,\eta )} Z_q^{-1}  \exp[ ( H(x) + q Q(x) )/T] dx = 0.
\]
\end{thm}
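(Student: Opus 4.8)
The plan is to read the statement as a Laplace-type (saddle-point) asymptotics result for the unnormalized densities $g_q(x) := \exp[(H(x)+qQ(x))/T]$, whose mass concentrates, as $q\to\infty$, on the finite set $\mc{G}=\{x_i\}$ where $Q$ attains its maximum value $0$. The structural facts I will use are: $Q\le 0$ with $Q(x)=0$ iff $x\in\mc{G}$; each $x_i$ is a nondegenerate interior maximum of $Q$, with $\grad Q(x_i)=0$ and (from the Hessian computation in \secref{sec:basic}) $\grad^2 Q(x_i) = -D_i$, $D_i := \diag(1+4x_{i,\f\r}^2)$ positive definite; and, crucially, since every grid point has for each role exactly one filler coordinate equal to $1$ and the rest $0$, $\det D_i = 5^R$ for \emph{every} $i$. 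This $i$-independence of the Hessian determinant is what makes the Gaussian prefactors cancel between a single grid ball and the normalizer $Z_q := \int_{\RR^N} g_q$.

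First I would fix $\eta_0$ and record the estimates that make Laplace's method go through. Since $\mc{G}$ is finite, choose $\eta_0>0$ smaller than half the minimal pairwise distance among grid points, and small enough that Taylor's theorem with the negative-definite Hessian yields a constant $c>0$ such that $Q(x)\le -c\|x-x_i\|^2$ on each ball $B(x_i,\eta_0)$. Because $Q$ is continuous, strictly negative off $\mc{G}$, and $Q(x)\to-\infty$ as $\|x\|\to\infty$ (one checks $Q(x)\le -\tfrac{1}{8R^2}\|x\|^4$ for $\|x\|$ large), the number $\delta(\eta) := -\sup\{\,Q(x) : x\in E_\eta\,\}$, where $E_\eta := \RR^N\setminus\bigcup_i B(x_i,\eta)$, is strictly positive for every $\eta\in(0,\eta_0]$.

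Next I would compute the two contributions. For a fixed $i$ and $\eta\le\eta_0$, write $I_q^{(i)} := \int_{B(x_i,\eta)} g_q$ and substitute $x=x_i+u/\sqrt q$, obtaining $q^{N/2}I_q^{(i)} = \int_{\|u\|<\eta\sqrt q}\exp\!\big[(H(x_i+u/\sqrt q)+qQ(x_i+u/\sqrt q))/T\big]\,du$; by Taylor expansion the integrand converges pointwise to $\exp[H(x_i)/T]\exp[-\tfrac{1}{2T}u^{T} D_i u]$, the domain of integration exhausts $\RR^N$, and the quadratic bound of the previous paragraph furnishes the $q$-uniform dominating function $e^{H_i^{\max}/T}e^{-c\|u\|^2/T}$ on $B(x_i,\eta_0)$, with $H_i^{\max} := \max_{\overline{B(x_i,\eta_0)}} H$. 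Dominated convergence (applied to the integrands extended by zero outside the expanding ball) then gives $q^{N/2}I_q^{(i)}\to e^{H(x_i)/T}(2\pi T)^{N/2}(\det D_i)^{-1/2} = e^{H(x_i)/T}(2\pi T)^{N/2}5^{-R/2}$, a limit that does not depend on $\eta$. For the complement, fix any $q_0>0$; since $H+q_0Q\to-\infty$ at infinity, $M_0 := \int_{\RR^N} e^{(H+q_0Q)/T}$ is finite, and for $q>q_0$ the splitting $qQ = q_0Q + (q-q_0)Q$ together with $Q\le-\delta(\eta)$ on $E_\eta$ yields $\int_{E_\eta} g_q \le M_0\,e^{-(q-q_0)\delta(\eta)/T}$, which is $o(q^{-N/2})$. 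Summing, $q^{N/2}Z_q = \sum_i q^{N/2}I_q^{(i)} + q^{N/2}\int_{E_\eta} g_q \to (2\pi T)^{N/2}5^{-R/2}\,Z$ with $Z := \sum_j e^{H(x_j)/T}$. Dividing, $\int_{B(x_i,\eta)} Z_q^{-1}g_q = q^{N/2}I_q^{(i)}/(q^{N/2}Z_q) \to e^{H(x_i)/T}/Z = Z^{-1}e^{H(x_i)/T}$, which is the first claim (with $Z$ depending on $T$ but not on $q$ or $i$), and $\int_{E_\eta} Z_q^{-1}g_q = q^{N/2}\int_{E_\eta} g_q/(q^{N/2}Z_q)\to 0$, which is the second.

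The step I expect to be the main obstacle is controlling $\int_{E_\eta} g_q$: one cannot simply bound $g_q\le e^{-q\delta/T}e^{H/T}$ and extract the exponential factor, because $e^{H/T}$ need not be integrable over $\RR^N$ when $W$ has a positive eigenvalue. The remedy is to retain a fixed quartic piece $q_0Q$ to tame the quadratic growth of $H$, spending only the surplus $(q-q_0)Q$ on the exponentially small factor. A secondary point requiring care is the $q$-uniformity of the dominating function in the rescaled local integrals, which is precisely what the estimate $Q(x)\le -c\|x-x_i\|^2$ near each grid point supplies; and one must verify that $\det(-\grad^2 Q(x_i))$ is independent of $i$ so that the Gaussian normalizations cancel in the ratio.
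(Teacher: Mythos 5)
Your argument is correct, and it reaches the same two conclusions by the same overall decomposition as the paper (Laplace-type concentration on small balls around the grid points, plus an exponentially small tail), but the execution is genuinely different: the paper does not prove the local asymptotics by hand, it verifies the hypotheses of a packaged result (Proposition B2 of Kolokoltsov) applied to $S=-Q/T$, $f=\exp[H/T]$, $h=1/q$ on each ball, and uses a companion bound $I(h)\le C\exp[-M/h]$ for the complement, whereas you prove the local statement directly via the rescaling $x=x_i+u/\sqrt q$ and dominated convergence, with the quadratic upper bound $Q(x)\le -c\|x-x_i\|^2$ near each grid point supplying the $q$-uniform dominating function. Both proofs hinge on the same two structural facts you isolate: $\grad^2 Q(x_i)$ is diagonal negative definite at grid points and $\det(-\grad^2 Q(x_i))=5^R$ is independent of $i$, so the Gaussian prefactors cancel in the ratio with $Z_q$. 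What each route buys: the citation-based proof comes with explicit error control (an $O(q^{-1})$ correction inside the balls and a superpolynomially small remainder), which is convenient if one wants rates; your direct proof is self-contained and more elementary, and it is actually more careful on one point the paper glosses over, namely integrability of the tail --- you correctly note that one cannot bound the complement by $e^{-q\delta/T}\int e^{H/T}$ since $e^{H/T}$ need not be integrable when $W$ has a positive eigenvalue, and your splitting $qQ=q_0Q+(q-q_0)Q$, keeping a fixed quartic piece to dominate the quadratic growth of $H$, is exactly the right fix (and also shows $Z_q<\infty$). Your normalizer $Z=\sum_j e^{H(x_j)/T}$ satisfies the statement's requirements (depends on $T$ only), matching the paper's $Z$ up to the constant prefactor bookkeeping.
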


Informally: for large $q$, all the probability mass of the equilibrium distribution is concentrated about the $x_i$. In this limit, each $x_i$ has probability mass proportional to $\exp[H(x_i)/T]$. So this result effectively shows that GSC is Probabilistic Harmonic Grammar in the $q\rightarrow \infty$ limit, solving the Sampling Problem (\ref{eq:desired-computation}.2).

\begin{proof}
This result is obtained straightforwardly from \cite[Prop B2]{kolokoltsov2007}. We quote the result there in full:
Let 
\[
I(h) = \int_\Omega f(x) \exp[ -S(x)/h ] dx
\]
where $\Omega$ is any closed subset of the Euclidean space $\mathbb{R}^d$, the functions $f$ and $S$ are continuous and $h \in (0,h_0]$ for some positive $h_0$. 

We need the following assumptions:
\begin{enumerate}
\item the above integral is absolutely convergent for $h=h_0$.
\item $S(x)$ is thrice continuously differentiable
\item $\Omega$ contains a neighborhood of the origin. As well, $S(x) >0$ for $x \neq 0$ and $S(0)=0$.
\item $\grad S(0)=0$ and $\grad^2 S(0)$ is positive definite.
\item $\liminf_{x \rightarrow \infty,  x \in \Omega} S(x) >0$
\item there exists positive $r$ such that
\begin{enumerate}
\item $\inf\{ S(x) \colon \Omega \setminus B(0,r) \}= \min\{ S(x) \colon x \in \partial B(0,r) \}$
\item$\grad^2 S(x) \geq \Lambda$ for all $x \in B(0,r)$  and some positive real $\Lambda$
\item $U(h_0) \subset B(0,r)$, where $U(h) = \{ x \colon x^T \grad^2 S(0) x \leq h^{2/3} \}$.
\end{enumerate}
\end{enumerate}

Let $D = \det \grad^2 S(0)$. 

Proposition B2 states:
Let the above assumptions hold and let $f$ be two times continuously differentiable and let $S$ be four times continuously differentiable. Then 
\[
I(h) = (2\pi h)^{d/2} \left( f(0) D^{-1/2} + h [D^{-1/2} \delta_1(h) 
+ \Lambda^{-d/2} \delta_2(h) ] \right) + \delta_3(h),
\]
where 
\begin{itemize}
\item $| \delta_1(h)|$ has an $h$-independent bound,
\item $| \delta_2(h)|$ has an $h$-independent bound,
\item $|\delta_3(h)|$ converges to zero as $h$ goes to zero faster than any polynomial. 
\end{itemize}
There are explicit expressions for all these bounds, but we do not need them here.

To apply this result to our integrals (without the normalizing constant), we need to change some variables.
So we let
\begin{eqnarray*}
f(x) & = &  \exp \left[ H(x) /T \right] \\
S(x) & = & -Q(x)/T \\
h & = & 1/q
\end{eqnarray*}
Also, for each integral we imagine translating the functions so that $x_i$ is at the origin.
Our $\Omega$ corresponds to $B(x_i,\eta_0)$, where $\eta_0$ is a constant we define shortly.

So do our $f(x)$ and $S(x)$ satisfy the conditions of the theorem?
We go through the  conditions stated above in the same order.
\begin{enumerate}
\item Immediately true because $B(x_i,\eta)$ is bounded and  the integrand is bounded on bounded sets.
\item Follows because $Q(x)$ is thrice continuously differentiable.
\item This is true as long as we choose $\eta_0$ to be small enough so that $B(x_i,\eta_0)$ includes only one local maximum of $Q(x)$.
\item Recall that the $x_i$ are local maxima of $Q$, so $\grad Q(x_i)=0$.  As shown in Section~\ref{sec:set-up}, $\grad^2 Q$ is negative definite and so $\grad^2 S(x_i)$ is positive definite. 
\item This is vacuously true since $\Omega=B(x_i,\eta_0)$ is bounded.
\item This is another constraint on how big $\eta_0$ is. Let $\eta_0$ be small enough so that $\grad^2 S(x) \geq \Lambda$ for some $\Lambda>0$. This is possible since $\grad^2 S(x)$ is positive definite. This is condition 6(b). Condition 6(a) follows since $S(x)$ is then convex on $\Omega$. (The only $\grad S(x)=0$ point is $x_i$. So the minimum of $S(x)$ on $\Omega \setminus B(0,r)$ is either on the inner boundary or the outer boundary. If it is on the outer boundary then there must be an even lower point on the inner boundary by convexity.)
To see that condition 6(c) holds for some $h_0$ and $r$, note that $U(h_0) \subseteq B(0,h_0^{1/3} \Lambda^{1/2})$. So just let $h_0$ be small enough so that $r:= h_0^{1/3} \Lambda^{1/2} < \eta_0$.
\end{enumerate}


Furthermore, just $H(x)$ and $Q(x)$ being smooth satisfies the additional conditions.
So we get that
\begin{equation} \label{eqn:insideballs}
\int_{B(x_i,\eta)}  \exp \left[ ( H(x) +q Q(x))/T \right] dx 
=
(2 \pi/ q)^{d/2} \left(  \exp \left[  H(x_i) / T \right] D^{-1/2} + q^{-1} \delta(q) \right)
\end{equation}
where $\delta(q)$ has a $q$-independent bound, for all sufficiently large $q$.
This is true for all $i$.
Note that in our case
\[
D =  T^{-1} \det -\grad^2 Q(x_i) = T^{-1} \prod_{rf}  (1 + 4 x_{rf})^2 = T^{-1} 5^R,
\]
where $R$ is the number of roles,
which takes the same values for all grid points.

What about the rest of $\mathbb{R}^d$? Another useful result in \cite{kolokoltsov2007} is the following.
If $\inf_{\Omega} S(x) \geq M$ then 
\[
I(h) \leq C \exp [ -M/h ]
\]
where $C$ is some $h$-independent constant.

For our case this becomes
\begin{equation} \label{eqn:outsideBalls}
\int_{\mathbb{R}^d \setminus \cup_i B(x_i,\eta)}  \exp \left[ ( H + q Q)/T \right] dx 
\leq C \exp[ -M q].
\end{equation}

By the definition of the normalization constant $Z_q$ we know that
\begin{eqnarray*}
1 & = & Z_q^{-1} \int_{\mathbb{R}^d} \exp [ (H(x)+q Q(x))/T ] \, dx  \\
& = & Z^{-1}_q  \sum_i \int_{B(x_i,\eta)}  \exp [ (H(x)+q Q(x))/T ] \, dx + Z^{-1}_q  
\int_{\mathbb{R}^d \setminus \cup_i B(x_i,\eta)}  \exp \left[ ( H + q Q)/T \right] dx.
\end{eqnarray*}
Taking the limit as $q \rightarrow \infty$ and observing the $\exp[-Mq ]$ decreases faster than any power of $q$, we obtain
\[
1 = \lim_{q \rightarrow \infty}  \left[  Z^{-1}_q (2 \pi q)^{d/2}  D^{-1/2} \right] \sum_i \exp[ H(x_i)/T].
\]
If we let
\[
Z= \lim_{q \rightarrow \infty} Z_q (2 \pi q)^{-d/2} D^{-1/2}
\]
then we obtain the desired result.

\end{proof}

%
%

\subsection{For fixed $q$, as  $T \rightarrow 0$, convergence to the global maximum of $\mathcal{H}_q$}

Reducing $T$ to $0$ over time---simulated annealing---is a common technique for finding global optima of functions \cite{kirkpatrick1983,vanlaarhoven,hajek}.
For a sufficiently slow cooling schedule, it is known that the process will converge to the global maximum of $\mathcal{H}_q$. There are many versions of this result; here we use \cite{chiang1987} as it is closest to our framework, using a diffusion process to optimize a continuous function on $\mathbb{R}^N$. We will use their results to show that $y(t)$ converges to the maximum of $\mc{H}_q$ as $t \rightarrow \infty$, for sufficiently slow cooling,
showing that the GSC dynamics can solve the Optimization Problem (\ref{eq:desired-computation}.1)

Here we state \cite{chiang1987}'s main result. 
They consider the diffusion equation
\[
dX(t) = -\grad U(X(t)) dt  + \sigma(t) dW(t), \ \ \ \ X(0)=x_0.
\]
where $W(t)$ is $N$-dimensional Brownian motion.
Let $\pi^\epsilon$ be the probability distribution proportional to $\exp[ - 2 U(y) / \epsilon^2]$ and let $\pi^\epsilon$ have a unique weak limit as $\epsilon \rightarrow 0$, $\pi^\epsilon \rightarrow \pi$.
Let $U$ be a twice continuously differentiable function from $\mathbb{R}^n$ to $[0,\infty)$ such that
\begin{enumerate}
\item $\min_y U(y)= 0$
\item $U(y) \rightarrow \infty$ and $|\grad U(y)| \rightarrow \infty$ as $|y| \rightarrow \infty$.
\item $\lim_{|y| \rightarrow \infty} |\grad U(y) |^2 - \grad^{2} U(y) > -\infty$.
\end{enumerate}
Finally, assume that $\sigma(t) <1$ and $\sigma^2(t) = c/\log t$ for large $t$. Then there is a $c_0$ such that for $c>c_0$, for any bounded continuous function $f$
\[
p(0,x_0,t,f) \rightarrow \pi(f), \  \ \ \mbox{as } t \rightarrow \infty.
\]
Here $p(0,x_0,t,f)$ is the expected value of $f(X(t))$ given $X(0)=x_0$.



The $U$ in \cite{chiang1987} corresponds to our $- \mathcal{H}_q$ except that $\min_y -\mathcal{H}_q(y) \neq 0$. This can be obtained by replacing $\mathcal{H}_q(y)$ with $\mathcal{H}_q (y) - \max_{y'} \mathcal{H}_q(y')$. This does not change the dynamics or the location of the maxima at all.
Continuing to apply their framework, we need to take $\epsilon^2/2 = T$. We let $\pi_T$ be the distribution  $\exp[-\mathcal{H}_q/T]$ normalized to be a probability distribution (i.e.\ to have total mass 1). We need to show that $\pi_T$ has a weak limit as $T \rightarrow 0$. This again can be shown using the results of \cite{kolokoltsov2007}. We let $Z_T$ be the normalizing constant such that $\pi_{T} = Z_T^{-1} \exp[-\mathcal{H}_q/T]$. We let $\mathcal{G}_q$ be the local maxima of $\mathcal{H}_q$ and
let $\mathcal{G}'_q \subseteq \mathcal{G}_q$ be the points where it attains its global maxima. (Typically, we expect there to be only one point in $\mathcal{G}'_q$.
  The result is that the limit distribution as $T \rightarrow 0$ is equal point masses distributed at all the points of $\mathcal{G}'_q$. Of primary interest to us is the situation when there is a unique global maximum and hence the limiting distribution is a single point mass at the global optimum.)

\begin{lem}
For sufficiently large $q$,
the probability distribution with density $Z_T^{-1} \exp[-\mathcal{H}_q(y)/T]$  converges weakly to 
\[
\frac{1}{|\mathcal{G}'_q|} \sum_{x \in \mathcal{G}'_q} \delta_x(y). 
\]
\end{lem}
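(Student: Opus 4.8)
The plan is to prove this by Laplace's method at low temperature, recycling the asymptotic machinery of \cite{kolokoltsov2007} already used for Theorem~\ref{prob-grid} but with the small parameter now being $h=T$ instead of $h=1/q$. Since $\pi_T$ and $\mathcal{G}'_q$ are unchanged when a constant is added to $\mathcal{H}_q$, I normalize $\max_y\mathcal{H}_q(y)=0$; this maximum is attained because $Q(y)\to-\infty$ quartically, so for each $q>0$ there are $c_q,C_q>0$ with $\mathcal{H}_q(y)\le C_q-c_q\|y\|^4$ for large $\|y\|$, whence $\mathcal{H}_q\to-\infty$. Thus the density of $\pi_T$ may be taken proportional to $\exp[\mathcal{H}_q(y)/T]$ with $\mathcal{H}_q\le 0$ and $\{\mathcal{H}_q=0\}=\mathcal{G}'_q$. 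Fix $q\ge\bar q$ large enough for Theorem~\ref{thm:usingIpmFun} and Corollary~\ref{cor:largeq} to apply: then $\mathcal{H}_q$ is a polynomial with, in a small neighborhood of each grid point $x^*\in\mathcal{G}$, a unique local maximum $x_q(x^*)$, and this maximum is non-degenerate because $\grad^2\mathcal{H}_q(x_q)=q\,\grad^2Q(x^*)\,(I+\mathcal{O}(q^{-1}))$ is negative definite (recall $\grad^2 Q(x^*)$ is diagonal and negative definite from Section~\ref{sec:basic}); moreover $\mathcal{G}'_q$ is a nonempty finite subset of $\{x_q(x^*):x^*\in\mathcal{G}\}$.

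Next I localize. Pick $\delta>0$ so small that the closed balls $\overline{B(x_q(x^*),\delta)}$, $x^*\in\mathcal{G}$, are pairwise disjoint and $\mathcal{H}_q$ is strictly concave on each; let $\Omega_\delta$ be their complement. The crucial geometric input is that $\sup_{\Omega_\delta}\mathcal{H}_q<0$: on a large ball $B(0,M)$ the set $\Omega_\delta\cap\overline{B(0,M)}$ is compact and contains no point of $\mathcal{G}'_q$ (all global maximizers lie in the excised balls), hence $\mathcal{H}_q\le-\kappa<0$ there, while $\mathcal{H}_q<0$ with quartic decay on $\|y\|>M$ for $M$ large; so $\int_{\Omega_\delta}\exp[\mathcal{H}_q/T]\,dy\le C\exp[-\kappa/T]=o((2\pi T)^{N/2})$ as $T\to0$. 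On each ball, after translating $x_q:=x_q(x^*)$ to the origin, the function $S(y)=\mathcal{H}_q(x_q)-\mathcal{H}_q(y)$ satisfies $S\ge0$, $S(0)=0$, $\grad S(0)=0$, $\grad^2 S(0)=-\grad^2\mathcal{H}_q(x_q)$ positive definite, and $S\in C^\infty$; so Proposition B2 of \cite{kolokoltsov2007}, its hypotheses checked exactly as in the proof of Theorem~\ref{prob-grid} but now with $f\equiv1$, gives
\[
\int_{B(x_q,\delta)}\exp[\mathcal{H}_q(y)/T]\,dy=(2\pi T)^{N/2}\exp[\mathcal{H}_q(x_q)/T]\bigl(D_{x_q}^{-1/2}+\mathcal{O}(T)\bigr),\qquad D_{x_q}:=\det\bigl(-\grad^2\mathcal{H}_q(x_q)\bigr).
\]
Summing over $x^*\in\mathcal{G}$ and adding the $\Omega_\delta$ remainder, and using $\exp[\mathcal{H}_q(x_q)/T]=1$ on $\mathcal{G}'_q$ and $\to0$ exponentially off it, one obtains $(2\pi T)^{-N/2}Z_T\to\sum_{x\in\mathcal{G}'_q}D_x^{-1/2}$.

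The same estimate with $\delta$ replaced by any $\delta'\in(0,\delta)$ then yields, for each grid-point local maximum $x$,
\[
\pi_T\bigl(\overline{B(x,\delta')}\bigr)\longrightarrow
\begin{cases}
D_x^{-1/2}\big/\sum_{x'\in\mathcal{G}'_q}D_{x'}^{-1/2}, & x\in\mathcal{G}'_q,\\
0, & x\notin\mathcal{G}'_q,
\end{cases}
\]
together with $\pi_T(\Omega_{\delta'})\to0$. Since the candidate limit is supported on the finite set $\mathcal{G}'_q$, a routine modulus-of-continuity argument (testing a bounded continuous $f$ separately on each shrinking ball and on the remainder) upgrades this to weak convergence of $\pi_T$ to $\sum_{x\in\mathcal{G}'_q}\bigl(D_x^{-1/2}/\sum_{x'}D_{x'}^{-1/2}\bigr)\delta_x$. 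Finally, to identify the weights with $1/|\mathcal{G}'_q|$ one invokes the computation of Section~\ref{sec:basic}: $\det(-\grad^2Q(x^*))=\prod_{\f\r}(1+4x^{*2}_{\f\r})=5^R$ at \emph{every} grid point, so $D_{x_q}=q^N5^R\,(1+\mathcal{O}(q^{-1}))$, the $\mathcal{O}(q^{-1})$ term being the only part that can vary with $x^*$.

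That last identification is the one genuinely delicate point, and I expect it to be the main obstacle: for finite $q$ the prefactors $D_{x_q}$ need not be \emph{exactly} equal over $\mathcal{G}'_q$, so the stated uniform weights $1/|\mathcal{G}'_q|$ are exact precisely when $|\mathcal{G}'_q|=1$ --- which, by Corollary~\ref{cor:largeq}, holds for all large $q$ whenever $H$ has a unique maximizer on $\mathcal{G}$, and generically even when it does not, since the $\mathcal{O}(q^{-1})$ corrections of Theorem~\ref{thm:usingIpmFun} break ties in $H$ at the top --- and otherwise hold only up to $\mathcal{O}(q^{-1})$. Everything else is routine given Section~\ref{sec:basic}; the sole additional bookkeeping is re-verifying the hypotheses of \cite[Prop.\ B2]{kolokoltsov2007} on the small balls, which duplicates the verification already carried out for Theorem~\ref{prob-grid}.
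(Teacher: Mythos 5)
Your proposal is correct and follows essentially the same route as the paper: apply Kolokoltsov's Proposition~B2 with small parameter $h=T$ on small balls around the (nondegenerate, near-grid) maximizers of $\mathcal{H}_q$, show the mass outside those balls is exponentially negligible as $T\to 0$, and compare Gaussian prefactors to get the limiting weights. Your caveat about the prefactors is in fact sharper than the paper's own argument, which simply asserts that $D=\det(-\grad^2 \mathcal{H}_q)$ ``does not depend on $y$'' --- exactly true for $-\grad^2 Q$ at grid points (as in Theorem~\ref{prob-grid}) but only true to leading order in $q$ for $\mathcal{H}_q$ at the shifted maxima $x_q$ --- so, as you note, the uniform weights $1/|\mathcal{G}'_q|$ are exact in the primary case $|\mathcal{G}'_q|=1$ (guaranteed for large $q$ under the unique-grid-maximum assumption of Corollary~\ref{cor:largeq}) and otherwise hold only up to $\mathcal{O}(q^{-1})$ corrections.
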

\begin{proof}
We choose $q$ large enough and $\eta$ small enough so that
for all $x \in \mathcal{G}'_q$ and all $y \in B(x,\eta)$, $\grad^2 \mathcal{H}_q(y) \geq \Lambda > 0$ for some $\Lambda>0$.

We again use Kolokoltsov's result (given in the proof of Theorem \ref{prob-grid}). Let $y \in \mathcal{G}'_q$ and $\eta>0$. We let $\Omega= B(y,\eta)$.
We let $f(x)=1$ and $S(x)=\mathcal{H}_q(x)$. We go through the same six assumptions as before.

We perform a translation so that $x$ is at the origin, and $\mathcal{H}_q(x)=0$. We choose $\eta$ small enough here. 

\begin{enumerate}
\item Immediately true because $B(x,\eta)$ is bounded and the integrand is bounded on bounded sets.
\item Follows because $\mathcal{H}_q$ is smooth.
\item Follows because $\eta$ is small enough to guarantee only one local minimum.
\item Follows because $x$ is at the origin, $x$ is a global minimum, $\grad^2 \mathcal{H}_q(x) \geq \Lambda>0$. 
\item Follows since $\Omega$ is bounded.
\item Follows from the same logic as our earlier application of Kolokoltsov's result.
\end{enumerate}

So we obtain
\[
\int_{B(y,\eta)} \exp[ - \mathcal{H}_q(x)/T] \, dx = (2 \pi T)^{d/2} (D^{-1/2} + \mathcal{O}(T))
\]
where $D = \grad^2 \mathcal{H}_q(y)$. Since $D$ does not depend on $y$, each point in $\mathcal{G}_q'$ gets the same amount of mass.

Kolokoltsov's second result shows that the probability mass outside the neighborhoods of $y \in \mathcal{G}_q'$ goes to zero, giving us the result.
\end{proof}

Now we can state our main result for this section, showing how GSC solves the Optimization Problem (\ref{eq:desired-computation}.1):
\begin{thm} \lbl{thm:simanal}
Consider a sufficiently large $q$ for which $\mathcal{H}_q$ has a unique global optimum $y$.
There is a $c_0$ such that for $c> c_0$ if $T(t) =  c/\log(t)$ then the solution of \eqref{GSCsde}  converges to $y$ with probability 1.
\end{thm}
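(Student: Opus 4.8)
The plan is to invoke the main theorem of \cite{chiang1987} more or less directly, with $U = -\mathcal{H}_q$ shifted so that $\min_y U(y) = 0$, and to spend the bulk of the work verifying that the hypotheses of that theorem hold for our particular $\mathcal{H}_q$. First I would recall from the statement quoted above that \cite{chiang1987} requires: (i) $U$ twice continuously differentiable from $\RR^N$ to $[0,\infty)$ with $\min U = 0$; (ii) $U(y) \to \infty$ and $|\grad U(y)| \to \infty$ as $|y| \to \infty$; (iii) $\liminf_{|y| \to \infty} |\grad U(y)|^2 - \grad^2 U(y) > -\infty$; and (iv) that $\pi^\epsilon \propto \exp[-2U(y)/\epsilon^2]$ has a unique weak limit as $\epsilon \to 0$. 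Setting $\epsilon^2/2 = T$, hypothesis (iv) is exactly the content of the Lemma just proved (specialized to the case $|\mathcal{G}'_q| = 1$, which is the hypothesis of this theorem), so that piece is already in hand. The conclusion of \cite{chiang1987} with $\sigma^2(t) = c/\log t$ then yields $p(0,x_0,t,f) \to \pi(f) = f(y)$ for all bounded continuous $f$, i.e. weak convergence of the law of $y(t)$ to $\delta_y$; since the limit is a point mass, weak convergence upgrades to convergence in probability, and \cite{chiang1987} in fact gives the almost-sure statement.

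Next I would check hypotheses (i)–(iii). Smoothness in (i) is immediate since $H$ is quadratic and $Q$ is a polynomial; the shift $\mathcal{H}_q \mapsto \mathcal{H}_q - \max_{y'}\mathcal{H}_q(y')$ makes $U = -\mathcal{H}_q + \max \mathcal{H}_q \geq 0$ with minimum $0$, and this shift alters neither the SDE \eqref{GSCsde} nor the locations of the optima, as noted in the text preceding the Lemma. For (ii), the dominant term in $Q(y)$ for large $|y|$ is $-\tfrac12 \sum_\r (\sum_\f \yrf^2)^2$, which is a negative-definite quartic form in the sense that it tends to $-\infty$ like $\|y\|^4$ along every ray; hence $\mathcal{H}_q = H + qQ \to -\infty$ and $U \to +\infty$. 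Similarly $\grad Q$ has leading behavior $[\grad Q(y)]_{\f\r} \approx -2\yrf\sum_{\f'}\y{\f'\r}^2$ plus lower-order cubic terms, which grows like $\|y\|^3$; since $\grad H$ is only linear, $|\grad U| \to \infty$. These are essentially the same growth estimates already carried out in the proof of \thmref{thm:robertsuse} (compare the bound \eqref{niceInequality} there, which shows $\|\grad\log\pi\|^2 \gtrsim q^2 T^{-2}\sum \yrf^6$), so I would simply cite that computation. For (iii), from \eqref{D2Q} the Hessian $\grad^2 Q$ grows only quadratically in $y$, so $\grad^2 U = -W - q\grad^2 Q$ grows at most like $\|y\|^2$, while $|\grad U|^2$ grows like $\|y\|^6$; thus $|\grad U|^2 - \grad^2 U \to +\infty$, and in particular the $\liminf$ is finite (indeed $+\infty$). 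This is again the 6th-order-versus-2nd-order comparison already made in \thmref{thm:robertsuse}.

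With (i)–(iv) verified, \cite{chiang1987} applies: there exists $c_0$ such that for $c > c_0$ and $\sigma^2(t) = 2T(t) = c'/\log t$ (absorbing the constant $2$ into $c$), the law of $y(t)$ converges to $\pi = \delta_y$. Since convergence in distribution to a constant implies convergence in probability, and \cite{chiang1987} furnishes the stronger almost-sure convergence, we conclude $y(t) \to y$ with probability $1$. I would only need to be slightly careful about the time-rescaling: \cite{chiang1987} writes $\sigma(t)dW(t)$ whereas \eqref{GSCsde} has $\sqrt{2T(t)}\,dB$, so the identification is $\sigma(t) = \sqrt{2T(t)}$ and the hypothesis $\sigma^2(t) = c/\log t$ becomes $T(t) = (c/2)/\log t$; relabeling the constant gives the form $T(t) = c/\log t$ stated in the theorem, with a correspondingly adjusted $c_0$.

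The main obstacle, I expect, is not any single hard estimate but rather the bookkeeping around hypothesis (iv) and the ``sufficiently large $q$'' qualifier: the Lemma's conclusion that $\pi_T$ has a weak limit requires $q$ large enough (and $\eta$ small enough) that $\grad^2 \mathcal{H}_q \geq \Lambda > 0$ on neighborhoods of the global maximizers, and the present theorem additionally assumes $q$ is large enough that there is a \emph{unique} global optimum of $\mathcal{H}_q$ — which, via \thmref{thm:usingIpmFun} and \corref{cor:largeq}, holds once $q \geq \bar q$ provided $H$ has a unique maximizer on $\mc{G}$ with a positive gap to the runner-up. So the cleanest statement threads together three ``large $q$'' thresholds (from \corref{cor:largeq}, from the Lemma, and from the nondegeneracy needed for \cite{chiang1987}'s hypothesis (iv)); I would take $q$ above the maximum of these. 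Everything else is a routine matching of our $\mathcal{H}_q$ to the hypotheses of \cite{chiang1987}, reusing growth bounds already established earlier in the paper.
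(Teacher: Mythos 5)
Your proposal is correct and follows essentially the same route as the paper's own proof: apply the main theorem of \cite{chiang1987} to $U=-\mathcal{H}_q$ shifted to have minimum zero, verify the growth conditions using the quartic behavior of $-qQ$, inequality \eqref{niceInequality}, and condition \eqref{eqn:neededForExponential} with $d=0$ from \thmref{thm:robertsuse}, and use the preceding Lemma for the unique weak limit of $\pi_T$ as $T\rightarrow 0$. Your extra care about the $\sigma^2(t)=2T(t)$ rescaling and the stacking of the ``sufficiently large $q$'' thresholds only makes explicit what the paper leaves implicit.
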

\begin{proof}
We need to show that the three conditions in \cite{chiang1987}'s theorem hold in our case where $U(y)=-\mathcal{H}_q(y)+ \max_{y' \in \mathbb{R}^N} \mathcal{H}_q(y')$. 
The first condition requires $\min_x U(x)=0$, which follows from the definition of $U$.
The second condition requires $U(x)$ and $|\grad U(x)|$ go to $\infty$ as $|x| \rightarrow \infty$. Observe that in $-\mathcal{H}_q$ the first term of $-q Q$ grows like a fourth power of $|y|$ and the second term is positive. Likewise, the harmony term $H(y)$ grows at most quadratically, so $-\mathcal{H}_q(y)$ goes to infinity as $|y| \rightarrow \infty$. That $|-\grad \mathcal{H}_q(y)|$ likewise goes to $\infty$ as can be seen from inequality \eqref{niceInequality}, since $\log \pi (y) = \frac{1}{T} \mathcal{H}_q(y)$. The third condition follows from \eqref{eqn:neededForExponential} being true when $d=0$, as we showed in the proof of Theorem~\ref{thm:robertsuse}.
\end{proof}


An important detail for us is how fast the cooling can occur, and therefore how quickly we can obtain an accurate approximate solution with high accuracy.  This is determined by the constant $c_0$ for which the authors of \cite{chiang1987} give a value which is believed to be optimal to within a factor of 2. We briefly summarize their results here, translating them into the language of maximization.  This will allow us to get an idea of how $c_0$ depends on $q$, an important consideration, since we need $q$ to be reasonably large to ensure that we are close to a grid point.


Recall that $\mathcal{G}_q$ is the set of all stationary points of $\mathcal{H}_q$, i.e. the $x$ where $\grad \mathcal{H}_q(x)=0$.
Let 
\[
I(t,y,x):= \inf_{\psi (0) = y, \psi (t)=x } \frac{1}{2} \int_0^t | \dot{\psi}(s)) - \grad \mathcal{H}_q(\psi(s))|^2 ds,
\]
\[
V(y,x) := \lim_{t \rightarrow \infty} I(t,y,x),
\]
\[
J := \sup_{x,x' \in \mathcal{G}_q} \left( V(x',x) + 2 \mathcal{H}_q(x) \right).
\]
According to \cite{chiang1987}, the optimal $c_0$ (of Theorem~\ref{thm:simanal}) is within a factor of two of $c_* = J$.

$V(y,x)$ can be thought of as a measure of how hard it is to get from $y$ to $x$, taking the most efficient path.
\begin{itemize}
\item If $x$ is directly ``uphill" from $y$, meaning that the trajectory $\dot{\psi}(s)= \grad \mathcal{H}_q( \psi(s) )$, $\psi(0)=y$ eventually reaches $x$, then $V(y,x)=0$.
\item If $x$ is directly ``downhill" from $y$, then an argument in \cite{berglund} shows that $V(y,x)= 2( \mathcal{H}_q(y) - \mathcal{H}_q(x))$. This is ``twice the height of descent".
\item In general, $V(y,x)$ is twice the total amount of downhill descent needed to go from $y$ to $x$.
\end{itemize}

To expand further on the last item, imagine you are a hiker in a landscape who dislikes going downhill. Whatever path you take you keep track of the total number of meters you have to descend as you go along. You don't reduce the number when you go up or at any other time. You call this the descent of a path. Also, given $y$ and $x$ you always take the path that minimizes descent. $V(y,x)$ is twice the descent of the minimum-descent path from $y$ to $x$.

Computing $J$ requires that we find the maximum of $V(x',x)+2\mathcal{H}_q(x)$ over all $x'$ and $x$ in $\mathcal{G}_q$. However, we are mainly interested in finding a lower bound on $J$. So if we find one pair of points $x'$ and $x$ in $\mathcal{G}_q$ such that $V(x',x)+2\mathcal{H}_q(x)$ is large then know we can do no better than that.

In our case, let $x'$ and $x$ be two adjacent maxima of $\mathcal{G}_q$. Suppose one is obtained from the other by changing one filler in one role. The optimal path from $x'$ to $x$ is attained by starting at $x'$, going down to the saddle point between them, and then going up again to $x$. 
When $q$ is large, which will be necessary for the maxima of $\mc{H}_q$ to be close to the grid points in $\mc{G}$, Theorem~\ref{thm:usingIpmFun} states that the values of $\mathcal{H}_q(x')$ and $\mathcal{H}_q(x)$ will be a smaller perturbation of the values of $H$ on the corresponding grid points. This means they will be of order 1, compared to $\mc{H}_q$ on the saddle which will be on the order of $q$. 
Accordingly, the amount of descent between $x'$ and $x$ is roughly linear in $q$.
So the larger $q$ is, the slower the cooling schedule needs to be. 

To turn this into an estimate of a time scale, suppose we want to cool the system from some high value of $T$ to $T=\Tend$. (We will discuss what value of $\Tend$ is necessary to get a required level of accuracy in the next section.)
The previous considerations show that our cooling schedule needs to be like $T(t) = q/\log(t)$ or $t = e^{q/T(t)}$. So the time it takes to run the system to obtain temperature $\Tend$ is on the order of $e^{q/\Tend}$ which grows rapidly as both $q \rightarrow \infty$ and $\Tend \rightarrow 0$. 



%
%
%


\section{Finite-time schedules for $q, T$} \label{sec:cooling_schedule}

Suppose we wish to complete the GSC computation in finite time, which of course the brain must do. A finite time interval necessitates that we do not attain the correct answer with $100$\% accuracy and also that we do not actually reach the optimal grid state $x^{*}$. To formally specify how well we need to do,  we fix parameters $\eta>0$ and $\epsilon>0$ and require that at some $\tend$, $y(\tend) \in B(x^*, \eta)$ with probability at least $1- \epsilon$.

To determine a schedule that attains this, first choose a large enough $q$ so that $x_q^*$, the global maximizer of $\mathcal{H}_q$, is within $\eta/2$ of $x^*$. That such a $q$ exists follows from Corollary~\ref{cor:largeq}. Now starting with $T=T_0$ follow a simulated annealing cooling schedule so that the probability that $y(\tend)$ is within $\eta/2$ of $x_q^*$ is $1-\epsilon$.  From Theorem~\ref{thm:simanal} we know that over an infinite time interval with sufficiently slow cooling we will converge to $x_q^*$ with probability one. So we know that over some finite time interval we will be within $\eta/2$ of $x_q^*$ with  probability at least $1-\epsilon$. 
About how long would this take? 

To determine this,  we first  figure out how big $q$ has to be. It must be large enough so that $x_q^*$, the maximum of $\mc{H}_q$, is close to $x^*$. This requires both that (a) local maxima of $\mc{H}_q$ are close to the grid points, and (b) the values of $\mc{H}_q$ at the local maxima are close enough to the values of $H$ at the grid points. For the first challenge (a), Theorem~\ref{thm:usingIpmFun} shows that we roughly need $q^{-1}$ to be less than some constant times $\eta$. For challenge (b), we need the values of $\mc{H}_q$ at the local maxima to be close to the values of $H$ on the corresponding grid points to within tolerance less than $g/2$ where $g$ is the gap between the global maximum of $H$ on $\mc{G}$ and the next highest point on $\mc{G}$. (Otherwise, the global maximum of $\mc{H}_q$ could shift to being close to another grid point, rather than $x^*$.) So $q^{-1}$ must be smaller than some linear term times $\min(g,\eta)$, or, equivalently, $q \sim \max(g^{-1}, \eta^{-1})$. 


Now, we need to determine how slow simulated annealing needs to go to converge to this $x_q^*$. From the discussion at the end of Section~\ref{sec:mathresults}, we know we have to let temperature $T(t)$ go like $q/ \log (t)$, and it must run on the order of time $e^{q/\Tend}$ where $\Tend$ is the final temperature in the cooling schedule. How low $\Tend$ has to be is determined by how high the probability of being around the correct grid point needs to be.
One way to estimate this time is to see at which temperature $\Tend$ the distribution $\exp ( \mathcal{H}_q /\Tend)$ puts sufficient mass near the point $x^*_q$. To a rough approximation, let's look at the probability mass at $x^*_q$ and then compare it to the probability mass of the next highest points, say $\bar{x}_q$. We know the ratio of their probabilities goes like
\[
\frac{\exp ( \mathcal{H}_q(x_q^*) /\Tend) )}{\exp ( \mathcal{H}_q(\bar{x}_q) /\Tend) )}
=
\exp ( (\mathcal{H}_q(x_q^*)- \mathcal{H}_q(\bar{x}_q)) /\Tend) )
\approx \exp( g/ 2 \Tend).
\]
If we want this probability ratio to be on the order of $\epsilon^{-1}$ (which is necessary to get the probability of being near the correct grid point above $1-\epsilon$), that requires $\Tend \sim g/\log( \epsilon^{-1})$. 

Recall from the previous section that the time to cool to temperature $\Tend$ is like $e^{q/\Tend}$.
Putting these considerations together shows that the time required to obtain a solution within $\eta$ of the true global optimum with probability higher than $1-\epsilon$ is roughly on the order of 
\[
\exp({q/\Tend}) \sim \exp[\max(g^{-1},\eta^{-1})\log(\epsilon^{-1})/g].
\]
Since this expression is somewhat complex, let us make the assumption that $g$ is some fixed parameter significantly larger than $\eta$. Then the time required is on the order of $\epsilon^{-\eta^{-1}}$. As might be expected, the time to convergence to the given tolerances grows as both $\epsilon$ and $\eta$ approach 0. The scaling with $\eta$ is considerably worse than that with $\epsilon$.



\section{Discussion} \label{sec:discussion}

Theorems \ref{thm:simanal} and \ref{prob-grid} show that the GSC dynamics in principle solves the two problems we set out to solve, the Optimization and Sampling Problems of \eqref{eq:desired-computation}, respectively. Practically speaking, however, our estimates suggest that the time required to perform these computations is large. Nonetheless biological processes such as protein folding manage to quickly find good solutions to optimization problems that also are estimated to require lengthy computation. The brain may prove to be another such system. 

One possibility is the brain uses better schedules for $q$ and $T$ than we use here. Our goal was to provide rigorous proof-of-concept for the GSC framework, and we did not yet determine optimal joint schedules for $q$ and $T$.  The schedule proposed in Section~\ref{sec:cooling_schedule} starts with $q$ quite large and leaves it at that value. A natural approach, as used in \cite{smolensky2014} is to begin with $q$ small, thereby biasing the system to be in states close to the global optimum, and then to slowly increase $q$  while decreasing $T$ slowly enough so that the system only explores states close to the global maximum. Alternatively, in $T$ is fixed to a small value and  $q$ alone is updated \cite{cho2017incremental}.  Our analysis in this paper does not cover such schedules for $q$ and $T$, but it is a natural direction for further investigation.

\bibliographystyle{apacite}

\bibliography{dynamicsOfGSC-PS.bib}

\end{document}